\newcommand{\hone}{\textbf{H1}}
\newtheorem{proposition}{Proposition}
\def\bbR{\mathbb{R}}
\def\bbE{\mathbb{E}}
\def\loss{\mathcal{L}}
\DeclareMathOperator*{\argmin}{arg\,min}
\DeclareMathOperator{\Int}{int}
\DeclareMathOperator*{\diag}{Diag}
\title{Super-efficiency of automatic differentiation for functions defined as a minimum}
\author{Pierre Ablin \\ CNRS and DMA \\ Ecole Normale Supérieure - PSL University \\Paris, 75005, France \\ \\
 Gabriel Peyré \\ CNRS and DMA \\ Ecole Normale Supérieure - PSL University\\Paris, 75005, France \\ \\
 Thomas Moreau
 \\ INRIA, CEA \\ Université Paris-Saclay \\ Palaiseau, 91200, France}
\begin{document}

\maketitle

\begin{abstract}
    In min-min optimization or max-min optimization, one has to compute the gradient of a function defined as a minimum.
    In most cases, the minimum has no closed-form, and an approximation is obtained via an iterative algorithm.
    There are two usual ways of estimating the gradient of the function: using either an \emph{analytic} formula obtained by assuming exactness of the approximation, or \emph{automatic} differentiation through the algorithm.
    In this paper, we study the asymptotic error made by these estimators as a function of the optimization error.
    We find that the error of the automatic estimator is close to the square of the error of the analytic estimator, reflecting a \emph{super-efficiency} phenomenon.
    The convergence of the automatic estimator greatly depends on the convergence of the Jacobian of the algorithm.
    We analyze it for gradient descent and stochastic gradient descent and derive convergence rates for the estimators in these cases.
    Our analysis is backed by numerical experiments on toy problems and on Wasserstein barycenter computation.
    Finally, we discuss the computational complexity of these estimators and give practical guidelines to chose between them.
\end{abstract}

\section{Introduction}
In machine learning, many objective functions are expressed as the minimum of another function: functions $\ell$ defined as
\begin{equation}
    \ell(x) = \min_{z \in \bbR^m}\loss\left(z, x\right),
    \label{eq:def_ell}
\end{equation}
where $\mathcal{L}:\bbR^m \times \bbR^n\to\bbR$.
Such formulation arises for instance in dictionary learning, where $x$ is a dictionary, $z$ a sparse code, and $\loss$ is the Lasso cost \citep{Mairal2010}.
In this case, $\ell$ measures the ability of the dictionary $x$ to encode the input data.
Another example is the computation of the Wassertein barycenter of distributions in optimal transport \citep{agueh2011barycenters}: $x$ represents the barycenter, $\ell$ is the sum of distances to $x$, and the distances themselves are defined by minimizing the transport cost.
In the field of optimization, formulation~\eqref{eq:def_ell} is also encountered as a smoothing technique, for instance in reweighted least-squares \citep{daubechies2010iteratively} where $\loss$ is smooth but not $\ell$.
In game theory, such problems naturally appear in two-players maximin games\citep{neumann1928theorie}, with applications for instance to generative adversarial nets \citep{goodfellow2014generative}.
In this setting, $\ell$ should be maximized.

A key point to optimize $\ell$ -- either maximize or minimize -- is usually to compute the gradient of $\ell$, $g^*(x) \triangleq \nabla_x\ell(x)$.
If the minimizer $z^*(x) = \argmin_z\loss(z, x)$ is available, the first order optimality conditions impose that $\nabla_z\loss\left(z^*(x), x\right) = 0$ and the gradient is given by
\begin{equation}
g^*(x) = \nabla_x \loss\left(z^*(x), x \right)\enspace.
    \label{eq:danskin}
\end{equation}
However, in most cases the minimizer $z^*(x)$ of the function is not available in closed-form.
It is approximated via an iterative algorithm, which produces a sequence of iterates $z_t(x)$.
There are then three ways to estimate $g^*(x)$:\\

The \textbf{analytic} estimator corresponds to plugging the approximation $z_t(x)$ in~\eqref{eq:danskin}
    \begin{equation}
    \label{eq:g1}
        g^1_t(x) \triangleq \nabla_x \loss\left(z_t(x), x \right)\enspace.
    \end{equation}

The \textbf{automatic} estimator is $g^2_t(x) \triangleq \partial_x \left[\loss\left(z_t(x), x \right)\right]$, where the derivative is computed with respect to $z_t(x)$ as well. The chain rule gives
\begin{equation}
\label{eq:g2}
    g^2_t(x) = \nabla_x \loss\left(z_t(x), x \right) + \frac{\partial z_t}{\partial x} \nabla_z \loss\left(z_t(x), x \right)
    \enspace .
\end{equation} This expression can be computed efficiently using automatic differentiation \citep{Baydin2018}, in most cases at a cost similar to that of computing $z_t(x)$.\\

If $\nabla_{zz}\loss(z^*(x), x)$ is invertible, the implicit function theorem gives $\frac{\partial z^*(x)}{\partial x} = \mathcal{J}(z^*(x), x)$ where $\mathcal{J}(z, x) \triangleq -\nabla_{xz}\loss\left(z, x\right) \left[\nabla_{zz}\loss\left(z, x\right)\right]^{-1}$. The \textbf{implicit} estimator is
\begin{equation}
\label{eq:g3}
    \medmuskip=1mu \thickmuskip=2mu \thinmuskip=1mu
    g^3_t(x) \triangleq \nabla_x \loss\left(z_t(x), x \right)
        + \mathcal{J}(z_t(x), x)\nabla_z \loss\left(z_t(x), x \right).
\end{equation}
This estimator can be more costly to compute than the previous ones, as a $m \times m$ linear system has to be solved.\\

These estimates have been proposed and used by different communities.
The analytic one corresponds to alternate optimization of $\loss$, where one updates $x$ while considering that $z$ is fixed.
It is  used for instance in dictionary learning \citep{olshausen1997sparse, Mairal2010} or in optimal transport \citep{Feydy2019}.
The second is common in the deep learning community as a way to differentiate through optimization problems \citep{Gregor10}.
Recently, it has been used as a way to accelerate convolutional dictionary learning \citep{Tolooshams2018}.
It has also been used to differentiate through the Sinkhorn algorithm in optimal transport applications \citep{Boursier2019, genevay2018learning}.
It integrates smoothly in a machine learning framework, with dedicated libraries \citep{Abadi2016, Paszke2019}.
The third one is found in bi-level optimization, for instance for hyperparameter optimization \citep{bengio2000gradient}.
It is also the cornerstone of the use of convex optimization as layers in neural networks  \citep{Agrawal2019}. \\

\textbf{Contribution}\quad
In this article, we want to answer the following question:
\emph{which one of these estimators is the best?}
The central result, presented in \autoref{sec:thm},  is the following convergence speed, when $\loss$ is differentiable and under mild regularity hypothesis (\autoref{prop:bound_g1}, \ref{prop:bound_g2} and \ref{prop:bound_g3})
\begin{align*}
    |g^1_t(x) - g^*(x)| &= O\left(|z_t(x)  - z^*(x) |\right)\enspace, \\
    |g^2_t(x) - g^*(x) | &= o\:\left(|z_t(x)  - z^*(x) |\right)\enspace, \\
    |g^3_t(x) - g^*(x) | &= O\left(|z_t(x)  - z^*(x) |^2\right) \enspace .
\end{align*}
This is a super-efficiency phenomenon for the automatic estimator, illustrated in \autoref{fig:illust_intro} on a toy example.
As our analysis reveals, the bound on $g^2$ depends on the convergence speed of the Jacobian of $z_t$, which itself depends on the optimization algorithm used to produce $z_t$.
In \autoref{sec:jaco}, we build on the work of \citet{Gilbert1992} and give accurate bounds on the convergence of the Jacobian for gradient descent (\autoref{prop:jac_gd}) and stochastic gradient descent (\autoref{prop:sgd_cst_step} and \ref{prop:sgd_decr_step}) in the strongly convex case.
We then study a simple case of non-strongly convex problem (\autoref{prop:conv_pth}).
To the best of our knowledge, these bounds are novel.
This analysis allows us to refine the convergence rates of the gradient estimators.
\begin{figure}
    \centering
    \includegraphics[width=.7\linewidth]{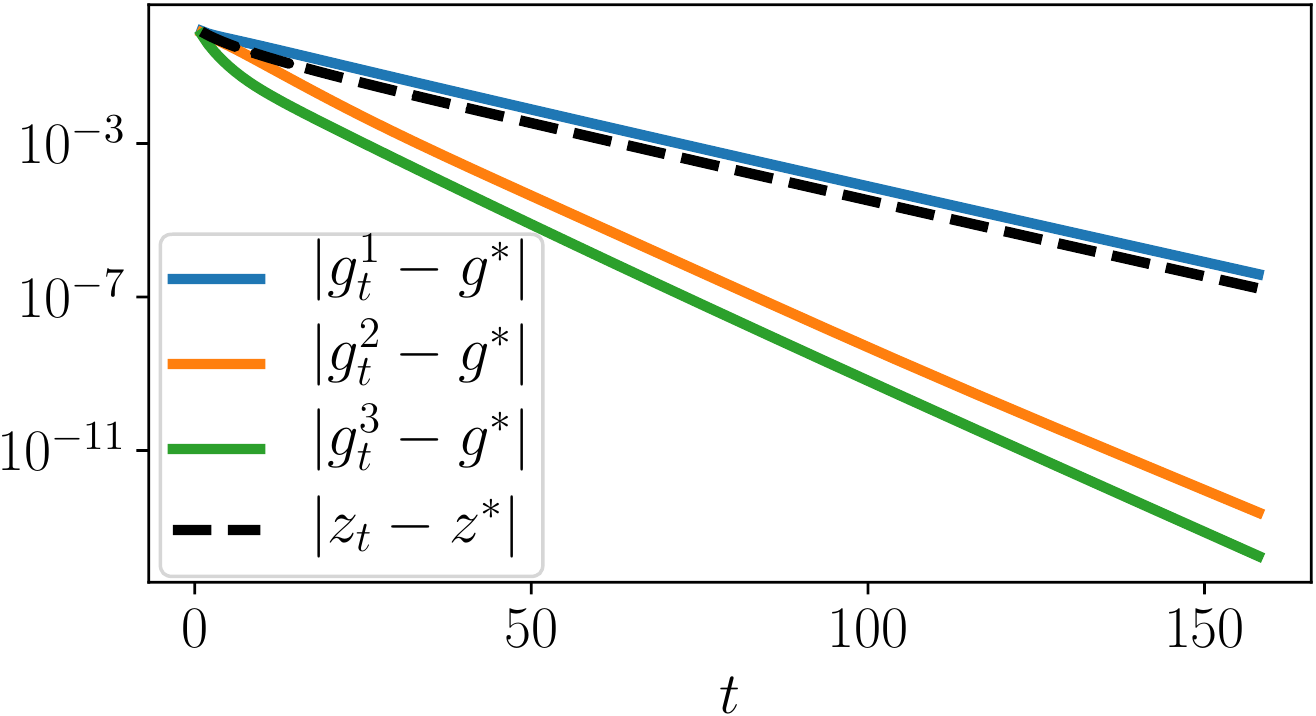}
    \vskip -1em
    \caption{Convergence of the gradient estimators. $\loss$ is strongly convex, $x$ is a random point and $z_t(x)$ corresponds to $t$ iterations of gradient descent. As $t$ increases, $z_t$ goes to $z^*$ at a linear rate. $g_t^1$ converges at the same rate while $g_t^2$ and $g_t^3$ are twice as fast.}
    \label{fig:illust_intro}
\end{figure}
In \autoref{sec:optim}, we start by recalling and extending the consequence of using wrong gradients in an optimization algorithm (\autoref{prop:inexact_gd} and~\ref{prop:inexact_sgd}). Then, since each gradient estimator comes at a different cost, we put the convergence bounds developed in the paper in perspective with a complexity analysis. This leads to practical and principled guidelines about which estimator should be used in which case.
Finally, we provide numerical illustrations of the aforementioned results in \autoref{sec:expe}.\\[.5em]
\textbf{Notation}
\quad
The $\ell^2$ norm of $z \in \bbR^m$ is $|z| = \sqrt{\sum_{i=1}^mz_i^2}$.
The operator norm of $M \in \bbR^{m \times n}$ is $\|M\| = \sup_{|z|=1}|Mz|$ and the Frobenius norm is $\|M\|_F = \sqrt{\sum_{i,j}M_{ij}^2}$.
The vector of size $n$ full of $1'$s is $\mathbb{1}_n$.
The Euclidean scalar product is $\langle \cdot, \cdot\rangle$.

The proofs are only sketched in the article, full proofs are deferred to appendix.
\section{Convergence speed of gradient estimates}
\label{sec:thm}
We consider a compact set $K = K_z \times K_x \subset \bbR^m \times \bbR^n$. We make the following assumptions on $\loss$.\\[.5em]
    \hone:\:$\loss$ is twice differentiable over $K$ with second derivatives $\nabla_{xz}\loss$ and $\nabla_{zz}\loss$ respectively $L_{xz}$ and $L_{zz}$-Lipschitz.\\[.3em]
    \textbf{H2}:\:For all $x \in K_x$, $z \to \loss(z, x)$ has a unique minimizer $z^*(x)\in \Int(K_z)$. The mapping $z^*(x)$ is differentiable, with Jacobian $J^*(x) \in \bbR^{n \times m}$.\\[.5em]
\hone{} implies that $\nabla_z\loss$ and $\nabla_x\loss$ are Lipschitz, with constants $L_z$ and $L_x$.
The Jacobian of $z_t$ at $x\in K_x$ is $J_t \triangleq \frac{\partial z_t(x)}{\partial x} \in \bbR^{n \times m}$.
For the rest of the section, we consider a point $x\in K_x$, and we denote $g^*=g^*(x)$, $z^*=z(x)$ and $z_t=z_t(x)$.
\subsection{Analytic estimator $g^1$}
The analytic estimator~\eqref{eq:g1} approximates $g^*$ \emph{as well as} $z_t$ approximates $z^*$ by definition of the $L_x$-smoothness.
\begin{proposition}[Convergence of the analytic estimator]
\label{prop:bound_g1}
The analytic estimator verifies $
    |g^1_t - g^*| \leq L_x |z_t - z^*| $.
\end{proposition}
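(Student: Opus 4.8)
The plan is to reduce the claim to the Lipschitz continuity of $\nabla_x\loss$ in its first argument, after rewriting $g^*$ via the first-order optimality condition. First I would recall that, by \textbf{H2}, $z^*=z^*(x)$ is an \emph{interior} minimizer of $z\mapsto\loss(z,x)$, hence $\nabla_z\loss(z^*,x)=0$; substituting this into~\eqref{eq:danskin} (equivalently, noting that the correction term in~\eqref{eq:g2} or~\eqref{eq:g3} vanishes at $z^*$) gives $g^*=\nabla_x\loss(z^*,x)$. Since $g^1_t=\nabla_x\loss(z_t,x)$ by~\eqref{eq:g1}, the error is exactly $g^1_t-g^*=\nabla_x\loss(z_t,x)-\nabla_x\loss(z^*,x)$.

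Next I would invoke \hone: twice-differentiability of $\loss$ on the compact set $K$ with $\nabla_{xz}\loss$ Lipschitz — hence continuous, hence bounded on $K$ — entails that $z\mapsto\nabla_x\loss(z,x)$ is $L_x$-Lipschitz, uniformly in $x\in K_x$; this is precisely the $L_x$-smoothness stated right after \hone. Applying this bound to the two points $z_t$ and $z^*$ yields $|\nabla_x\loss(z_t,x)-\nabla_x\loss(z^*,x)|\le L_x\,|z_t-z^*|$, which is the claimed inequality. Concretely, one can write $\nabla_x\loss(z_t,x)-\nabla_x\loss(z^*,x)=\int_0^1\nabla_{xz}\loss\big(z^*+s(z_t-z^*),x\big)(z_t-z^*)\,ds$ and bound the integrand by $\sup_K\|\nabla_{xz}\loss\|=L_x$, which also makes the constant explicit.

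There is essentially no hard step here; the only point requiring a small amount of care is that the Lipschitz estimate is only guaranteed on $K$, so both arguments $z_t$ and $z^*$ must lie in $K_z$. This holds for $z^*$ by \textbf{H2} ($z^*\in\Int(K_z)$) and for $z_t$ under the standing assumption that the algorithm keeps its iterates in $K_z$, so the chord between $z_t$ and $z^*$ stays in the convex hull of $K_z$ on which the bound on $\nabla_{xz}\loss$ applies.
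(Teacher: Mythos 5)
Your proof is correct and follows essentially the same route as the paper, which treats this as immediate from $g^*=\nabla_x\loss(z^*,x)$ (first-order optimality at the interior minimizer) together with the $L_x$-Lipschitz continuity of $\nabla_x\loss$ in $z$ that follows from \hone. Your extra care about the segment between $z_t$ and $z^*$ lying where the bound on $\nabla_{xz}\loss$ applies is a fine refinement but not needed beyond the Lipschitz property the paper already asserts on $K$.
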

\subsection{Automatic estimator $g^2$}
The automatic estimator~\eqref{eq:g2} can be written as
\begin{equation}
\label{eq:g2_dev}
    g^2 = g^* + R(J_t)(z_t - z^*) + R_{xz} + J_t R_{zz}
\end{equation}
where
\begin{align*}
    R_{xz} &\triangleq \nabla_x\loss(z_t, x) - \nabla_x\loss(z^*, x) - \nabla_{xz}\loss(z^*, x)(z_t-z^*) \\
    R_{zz} &\triangleq \nabla_z\loss(z_t, x) - \nabla_{zz}\loss(z^*, x) (z_t - z^*)\enspace.
\end{align*}
are Taylor's rests and
\begin{equation}
    R(J) \triangleq J \nabla_{zz}\loss(z^*, x) + \nabla_{xz}\loss(z^*, x) \enspace.
\end{equation}
The implicit function theorem states that $R(J^*)=0$.
Importantly, in a non strongly-convex setting where $\nabla_{zz}\loss(z^*, x)$ is not invertible, it might happen that $R(J_t)$ goes to $0$ even though $J_t$ does not converge to $J^*$.
\hone{} implies a quadratic bound on the rests
\begin{align}
    \label{eq:bound1}
    |R_{xz}|  \leq \frac{L_{xz}}{2}|z_t - z^*|^2
    \:\text{and }\:
    |R_{zz}|  \leq \frac{L_{zz}}{2}|z_t - z^*|^2.
\end{align}
We assume that $J_t$ is bounded $\|J_t\| \leq L_J$.
This holds when $J_t$ converges, which is the subject of~\autoref{sec:jaco}.
The triangle inequality in \autoref{eq:g2_dev} gives:
\begin{proposition}[Convergence of the automatic estimator]
\label{prop:bound_g2}
We define
\begin{equation}
\label{eq:def_l}
    L \triangleq L_{xz} + L_JL_{zz}
\end{equation}
Then $|g^2_t - g^*| \leq \|R(J_t)\||z_t-z^*| + \frac{L}{2}|z_t-z^*|^2$.

\end{proposition}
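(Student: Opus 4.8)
The plan is to simply combine the three facts already assembled above the statement: the exact decomposition \eqref{eq:g2_dev}, the quadratic remainder bounds \eqref{eq:bound1}, and the assumption $\|J_t\| \leq L_J$. Starting from \eqref{eq:g2_dev}, subtracting $g^*$ gives the identity $g^2_t - g^* = R(J_t)(z_t - z^*) + R_{xz} + J_t R_{zz}$. If one wants this identity derived from scratch rather than quoted, it follows by writing $g^2_t = \nabla_x\loss(z_t,x) + J_t\nabla_z\loss(z_t,x)$ from \eqref{eq:g2}, Taylor-expanding both $\nabla_x\loss(\cdot,x)$ and $\nabla_z\loss(\cdot,x)$ to first order around $z^*$, using the optimality condition $\nabla_z\loss(z^*,x)=0$ (valid since $z^*\in\Int(K_z)$ by \textbf{H2}) and $g^* = \nabla_x\loss(z^*,x)$ from \eqref{eq:danskin}, and grouping the first-order terms through the definition of $R(\cdot)$.

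From there I would apply the triangle inequality and submultiplicativity of the operator norm to get $|g^2_t - g^*| \leq \|R(J_t)\|\,|z_t - z^*| + |R_{xz}| + \|J_t\|\,|R_{zz}|$, then substitute $|R_{xz}| \leq \tfrac{L_{xz}}{2}|z_t-z^*|^2$ and $|R_{zz}| \leq \tfrac{L_{zz}}{2}|z_t-z^*|^2$ from \eqref{eq:bound1} together with $\|J_t\|\leq L_J$. This yields $|g^2_t - g^*| \leq \|R(J_t)\|\,|z_t-z^*| + \tfrac{1}{2}(L_{xz} + L_J L_{zz})|z_t-z^*|^2$, and identifying $L_{xz} + L_J L_{zz} = L$ from \eqref{eq:def_l} gives exactly the claim.

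There is no real analytical obstacle: each step is a one-line estimate, and the substantive work --- establishing \eqref{eq:g2_dev}, proving the quadratic remainder estimates from \hone{}, and justifying the boundedness $\|J_t\|\leq L_J$ (deferred to \autoref{sec:jaco}) --- has already been done in the surrounding text. The only thing demanding a little care is the bookkeeping of the decomposition: one must use $\nabla_z\loss(z^*,x)=0$ so that the zeroth-order term in the expansion of $\nabla_z\loss(z_t,x)$ drops, and must retain the leading term $R(J_t)(z_t-z^*)$ rather than discarding it, since $J_t$ need not equal $J^*$. The genuine point of the proposition is interpretive rather than technical: the bound is useful precisely because $\|R(J_t)\|\to 0$ under the conditions studied in \autoref{sec:jaco}, which is what produces the announced $o(|z_t-z^*|)$ rate.
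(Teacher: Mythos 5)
Your proposal is correct and follows exactly the paper's own route: the paper also obtains the bound by applying the triangle inequality to the decomposition \eqref{eq:g2_dev} and plugging in the remainder estimates \eqref{eq:bound1} together with $\|J_t\|\leq L_J$. Your extra derivation of \eqref{eq:g2_dev} via Taylor expansion around $z^*$ with $\nabla_z\loss(z^*,x)=0$ matches how the paper sets it up just before the proposition, so there is nothing to add.
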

This proposition shows that the rate of convergence of $g^2$ depends on the speed of convergence of $R(J_t)$.
For instance, if $R(J_t)$ goes to $0$, we have

$$
g^2_t - g^* = o(|z_t - z^*|)\enspace.
$$
Unfortunately, it might happen that, even though $z_t$ goes to $z^*$, $R(J_t)$ does not go to $0$ since differentiation is not a continuous operation.
In~\autoref{sec:jaco}, we refine this convergence rate by analyzing the convergence speed of the Jacobian in different settings.

\subsection{Implicit estimator $g^3$}
The implicit estimator~\eqref{eq:g3} is well defined provided that $\nabla_{zz}\loss$ is invertible.
We obtain convergence bounds by making a Lipschitz assumption on $\mathcal{J}(z, x) = -\nabla_{xz}\loss(z,x)\left[\nabla_{zz}\loss(z, x)\right]^{-1}$.
\begin{restatable}{proposition}{convergenceImplicit}[Convergence of the implicit estimator]
    \label{prop:bound_g3}
    Assume that $\mathcal{J}$ is $ L_{\mathcal{J}}$-Lipschitz with respect to its first argument, and that $\|\mathcal{J}_t\|\leq L_J$.
    Then, for $L$ as defined in \eqref{eq:def_l},
    \begin{equation}
        |g^3_t - g^*| \leq (\frac{L}{2} + L_{\mathcal{J}}L_{z})|z_t - z^*|^2
        \enspace.
    \end{equation}
\end{restatable}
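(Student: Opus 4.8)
\quad The plan is to observe that $g^3_t$ is nothing but the automatic estimator \eqref{eq:g2} in which the algorithmic Jacobian $J_t$ has been replaced by the matrix $\mathcal{J}_t \triangleq \mathcal{J}(z_t, x)$, and to recycle the decomposition \eqref{eq:g2_dev}. That decomposition was obtained purely by Taylor-expanding $\nabla_x\loss$ and $\nabla_z\loss$ around $z^*$ and never used anything specific to $J_t$, so the same computation gives
\begin{equation*}
    g^3_t - g^* = R(\mathcal{J}_t)(z_t - z^*) + R_{xz} + \mathcal{J}_t R_{zz},
\end{equation*}
with the same Taylor remainders $R_{xz}, R_{zz}$. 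It then only remains to bound the three terms on the right.

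Two of them are already under control. By \eqref{eq:bound1}, $|R_{xz}| \le \tfrac{L_{xz}}{2}|z_t - z^*|^2$, and since the hypothesis gives $\|\mathcal{J}_t\| \le L_J$, also $\|\mathcal{J}_t R_{zz}\| \le \tfrac{L_J L_{zz}}{2}|z_t - z^*|^2$. These two contributions sum to exactly $\tfrac{L}{2}|z_t - z^*|^2$, with $L$ as in \eqref{eq:def_l}.

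The only new ingredient — and the reason $g^3$ is genuinely second-order, whereas $g^2$ need not be — is the bound on $R(\mathcal{J}_t)$. The point is that $\mathcal{J}_t$, unlike $J_t$, is Lipschitz in $z$ and coincides with the true Jacobian at the optimum: the implicit function theorem gives $\mathcal{J}(z^*, x) = J^*$, so $R(\mathcal{J}(z^*,x)) = R(J^*) = 0$. Since $J \mapsto R(J)$ is affine with linear part $J \mapsto J\,\nabla_{zz}\loss(z^*, x)$, subtracting this vanishing term yields $R(\mathcal{J}_t) = \bigl(\mathcal{J}_t - \mathcal{J}(z^*,x)\bigr)\nabla_{zz}\loss(z^*, x)$. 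The Lipschitz assumption on $\mathcal{J}$ then bounds the first factor by $L_{\mathcal{J}}|z_t - z^*|$, while \hone{} (which makes $\nabla_z\loss$ $L_z$-Lipschitz, hence $\|\nabla_{zz}\loss\| \le L_z$ everywhere) bounds the operator norm of the second by $L_z$. Thus $\|R(\mathcal{J}_t)(z_t - z^*)\| \le L_{\mathcal{J}} L_z |z_t - z^*|^2$, and the triangle inequality applied to the displayed decomposition gives $|g^3_t - g^*| \le (\tfrac{L}{2} + L_{\mathcal{J}} L_z)|z_t - z^*|^2$.

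I expect the main (and essentially only) subtlety to be checking that \eqref{eq:g2_dev} transfers unchanged to $g^3$ — i.e. that substituting $\mathcal{J}_t$ for $J_t$ is harmless — together with the clean cancellation $R(\mathcal{J}(z^*,x)) = 0$ that demotes the a priori first-order term $R(\mathcal{J}_t)(z_t - z^*)$ to second order; the rest is a routine chain of triangle inequalities and Lipschitz bounds.
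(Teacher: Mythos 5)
Your proposal is correct and follows essentially the same route as the paper's proof: the same decomposition $g^3_t = g^* + R(\mathcal{J}_t)(z_t - z^*) + R_{xz} + \mathcal{J}_t R_{zz}$, the same key cancellation $R(\mathcal{J}_t) = (\mathcal{J}_t - \mathcal{J}(z^*,x))\nabla_{zz}\loss(z^*,x)$ giving $\|R(\mathcal{J}_t)\| \leq L_{\mathcal{J}}L_z|z_t - z^*|$, and the same use of \eqref{eq:bound1} for the remainder terms. No gaps; your explicit justification that $\|\nabla_{zz}\loss\|\leq L_z$ follows from \hone{} is a detail the paper leaves implicit.
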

\begin{proof}[Sketch of proof]
 The proof is similar to that of~\autoref{prop:bound_g2}, using $\|R(\mathcal{J}(z_t, x))\| \leq L_zL_{\mathcal{J}}|z_t -z^*|$.
\end{proof}
Therefore this estimator converges twice as fast as $g^1$, and at least as fast as $g^2$. Just like $g^1$ this estimator does not need to store the past iterates in memory, since it is a function of $z_t$ and $x$.
However, it is usually much more costly to compute.

\subsection{Link with bi-level optimization}

Bi-level optimization appears in a variety of machine-learning problems, such as hyperparameter optimization \citep{pedregosa2016hyperparameter} or supervised dictionary learning \citep{Mairal2012}.
It considers problems of the form
\begin{equation}
    \label{eq:bi_level}
    \min_{x \in \bbR^n}\ell'(x) \triangleq \loss'\left(z^*(x), x\right) \text{ s.t. } z^*(x) \in \argmin_{z\in\bbR^m} \loss(z, x),
\end{equation}
where $\loss': \bbR^m \times \bbR^n \to \bbR$ is another objective function.
The setting of our paper is a special instance of bi-level optimization where $\loss' = \loss$.
The gradient of $\ell'$ is
$$
g'^* = \nabla_x \ell'(x) = \nabla_x\loss'(z^*, x) + J^* \nabla_z\loss'(z^*, x) \enspace.
$$
When $z_t(x)$ is a sequence of approximate minimizers of $\loss$, gradient estimates can be defined as
\begin{align*}
    g'^1 &= \nabla_x \loss'(z_t(x), x), \\
    g'^2 &= \nabla_x \loss'(z_t(x), x) + J_t \nabla_z\loss'(z_t(x), x),  \\
    g'^3 &= \nabla_x \loss'(z_t(x), x) + \mathcal{J}(z_t(x), x) \nabla_z\loss'(z_t(x), x).
\end{align*}
Here, $\nabla_z\loss'(z^*, x)\neq0$, since $z^*$ does not minimize $\loss'$. Hence, $g'^1$ does not estimate $g'^*$.
Moreover, in general,
$$
\nabla_{xz}\loss'(z^*, x) + J^* \nabla_{zz}\loss'(z^*, x) \neq 0.
$$
Therefore, there is no cancellation to allow super-efficiency of $g^2$ and $g^3$ and we only obtain linear rates
$$
g'^2 - g^* = O(|z_t -z^*|), \enspace g'^3 - g^* = O(|z_t - z^*|)\enspace .
$$

\section{Convergence speed of the Jacobian}
\label{sec:jaco}

In order to get a better understanding of the convergence properties of the gradient estimators -- in particular $g^2$ -- we analyze it in  different settings.
A large portion of the analysis is devoted to the convergence of $R(J_t)$ to $0$, since it does not directly follow from the convergence of $z_t$.
In most cases, we show convergence of $J_t$ to $J^*$, and use
\begin{equation}
\label{eq:lip_rj}
    \|R(J_t)\|\leq L_z \|J_t - J^*\|
\end{equation}
in the bound of \autoref{prop:bound_g2}.
\subsection{Contractive setting}
When $z_t$ are the iterates of a fixed point iteration with a contractive mapping, we recall the following result due to \citet{Gilbert1992}.
\begin{proposition}[Convergence speed of the Jacobian]
\label{prop:gilbert}
Assume that $z_t$ is produced by a fixed point iteration
$$z_{t+1} = \Phi\left(z_t, x\right),$$
where $\Phi:K_z\times K_x\to K_z$ is differentiable. We suppose that $\Phi$ is contractive: there exists $\kappa< 1$ such that for all $(z, z', x) \in K_z \times K_z \times K_x$, $
|\Phi(z, x) - \Phi(z', x)| \leq \kappa |z - z'|.
$
Under mild regularity conditions on $\Phi$:
\begin{itemize}\itemsep0em
    \item $z_t$ converges to a differentiable function $z^*$ such that $z^* = \Phi(z^*, x)$, with Jacobian $J^*$.
    \item $|z_t - z^*| = O(\kappa^t)$ and $\|J_t - J^*\| =O(t\kappa^t)$
\end{itemize}
\end{proposition}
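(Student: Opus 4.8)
The plan is to split the statement into its two parts, handling the convergence of $z_t$ with the Banach fixed point theorem and the convergence of the Jacobian with a linearized recursion. Since $\Phi(\cdot,x)$ maps the compact (hence complete) set $K_z$ into itself and is $\kappa$-Lipschitz with $\kappa<1$, the Banach fixed point theorem gives a unique fixed point $z^*=z^*(x)$ with $|z_t-z^*|\le\kappa^t|z_0-z^*|$, which is the claimed $O(\kappa^t)$ rate. For differentiability of $x\mapsto z^*(x)$, the key observation is that contractivity forces the uniform bound $\|\partial_z\Phi(z,x)\|\le\kappa<1$ on $K$: evaluate the Lipschitz inequality on difference quotients in a fixed unit direction, and extend to the boundary by continuity. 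Hence $I-\partial_z\Phi(z^*,x)$ is invertible (Neumann series), and the implicit function theorem applied to $F(z,x)=z-\Phi(z,x)$ at $(z^*(x),x)$ gives differentiability of $z^*$, with $J^*$ solving $J^*=\partial_z\Phi(z^*,x)J^*+\partial_x\Phi(z^*,x)$; here the \emph{mild regularity conditions} are exactly what is needed to run the implicit function theorem, e.g.\ $\Phi$ being $C^1$ with locally Lipschitz derivatives on $K$.

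For the Jacobian, I would differentiate the recursion $z_{t+1}=\Phi(z_t,x)$ in $x$ to obtain $J_{t+1}=\partial_z\Phi(z_t,x)J_t+\partial_x\Phi(z_t,x)$. A preliminary step is $\sup_t\|J_t\|<\infty$: since $\|\partial_z\Phi\|\le\kappa$ and $\|\partial_x\Phi\|$ is bounded on $K$ by some $C_x$, one gets $\|J_{t+1}\|\le\kappa\|J_t\|+C_x$, hence $\|J_t\|\le\max(\|J_0\|,C_x/(1-\kappa))$ (this also supplies the boundedness of $J_t$ used in \autoref{sec:thm}). Subtracting the fixed-point identity for $J^*$ and writing $e_t=\|J_t-J^*\|$,
\begin{align*}
    J_{t+1}-J^* &= \partial_z\Phi(z_t,x)(J_t-J^*) + \big(\partial_z\Phi(z_t,x)-\partial_z\Phi(z^*,x)\big)J^* \\
    &\quad + \big(\partial_x\Phi(z_t,x)-\partial_x\Phi(z^*,x)\big).
\end{align*}
Using $\|\partial_z\Phi(z_t,x)\|\le\kappa$, the Lipschitz continuity of $\partial_z\Phi$ and $\partial_x\Phi$ on $K$, the bound on $\|J^*\|$, and $|z_t-z^*|=O(\kappa^t)$, this yields $e_{t+1}\le\kappa\,e_t+C\kappa^t$ for some constant $C$. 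Unrolling the scalar recursion gives $e_t\le\kappa^t e_0+C\sum_{s=0}^{t-1}\kappa^{t-1-s}\kappa^s=\kappa^t e_0+Ct\kappa^{t-1}=O(t\kappa^t)$, which is the second claim.

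The step I expect to be the main obstacle is the uniform derivative bound $\|\partial_z\Phi(z,x)\|\le\kappa$: it is precisely what forces the prefactor of $e_t$ in the recursion to be $\kappa<1$ rather than something possibly $\ge 1$, and hence what turns the error accumulation into only the polynomial loss $t\kappa^t$ instead of a slower or even divergent rate. Tightly coupled with it is pinning down which regularity of $\Phi$ the \emph{mild conditions} must encode — enough smoothness to differentiate the recursion, to apply the implicit function theorem, and to control $\|\partial_z\Phi(z_t,x)-\partial_z\Phi(z^*,x)\|$ and $\|\partial_x\Phi(z_t,x)-\partial_x\Phi(z^*,x)\|$ by $O(|z_t-z^*|)$ — all of which are available once $\Phi$ is, say, twice differentiable on the compact set $K$.
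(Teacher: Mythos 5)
Your argument is correct, and it is essentially the standard proof of this result: the paper itself does not prove \autoref{prop:gilbert} (it is recalled from \citet{Gilbert1992}), but your strategy — differentiate the fixed-point recursion, bound $\|J_t\|$ via $\|J_{t+1}\|\leq\kappa\|J_t\|+C_x$, split $J_{t+1}-J^*$ into a contraction term plus $O(\kappa^t)$ perturbations controlled by Lipschitz derivatives, and unroll $e_{t+1}\leq\kappa e_t+C\kappa^t$ into $O(t\kappa^t)$ — is exactly the technique the paper deploys in its proof of \autoref{prop:jac_gd}, the gradient-descent specialization. Your identification of the ``mild regularity conditions'' (enough smoothness to differentiate the recursion, apply the implicit function theorem at $z^*$ using $\|\partial_z\Phi\|\leq\kappa<1$, and get Lipschitz control of the derivatives on the compact set $K$) matches what the paper implicitly assumes, e.g.\ via \hone{}; the only cosmetic discrepancy is that the paper writes Jacobians in the transposed convention $J_t\in\bbR^{n\times m}$, which does not affect the argument.
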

\subsection{Gradient descent in the strongly convex case}
We consider the gradient descent iterations produced by the mapping $\Phi(z, x) = z - \rho \nabla_z\loss(z, x)$, with a step-size $\rho \leq 1 / L_z$.
We assume that $\loss$ is $\mu$-strongly convex with respect to $z$, i.e. $\nabla_{zz}\loss \succeq \mu \text{Id}$ for all $z\in K_z, x\in K_x$.
In this setting, $\Phi$ satisfies the hypothesis of \autoref{prop:gilbert}, and we obtain precise bounds.
\begin{restatable}{proposition}{jacGD}[Convergence speed of the Jacobian of gradient descent in a strongly convex setting]
\label{prop:jac_gd}
Let $z_t$ produced by the recursion $z_{t+1} = z_t - \rho \nabla_z\loss(z_t, x)$ with $\rho \leq 1/L_z$ and $\kappa \triangleq 1 - \rho \mu$.
We have $|z_t - z^*| \leq \kappa^t |z_0 - z^*|$ and $\|J_t - J^*\| \leq t \kappa^{t-1} \rho L |z_0 - z^*|$ where $L$ is defined in~\eqref{eq:def_l}.
\end{restatable}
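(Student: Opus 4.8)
The plan is to treat the two bounds separately: the bound on $|z_t-z^*|$ is the classical linear convergence of gradient descent, and the bound on the Jacobian is obtained by differentiating the recursion in $x$ and iterating the resulting affine inequality. For the first bound, I would use $\nabla_z\loss(z^*,x)=0$ and the integral mean value theorem to write $z_{t+1}-z^*=(I-\rho H_t)(z_t-z^*)$ with $H_t=\int_0^1\nabla_{zz}\loss(z^*+s(z_t-z^*),x)\,\mathrm ds$. Since $\mu\,\mathrm{Id}\preceq H_t\preceq L_z\,\mathrm{Id}$ and $\rho\le 1/L_z$, the symmetric matrix $I-\rho H_t$ has operator norm at most $1-\rho\mu=\kappa$, so $|z_{t+1}-z^*|\le\kappa|z_t-z^*|$ and the claimed bound follows by induction (as in \autoref{prop:gilbert}, one first checks that the iterates remain in $K_z$ so that these constants apply uniformly).

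For the Jacobian, since \hone{} makes $\loss$ twice differentiable, $z_t(\cdot)$ is differentiable, and the chain rule applied to $z_{t+1}=z_t-\rho\nabla_z\loss(z_t,x)$ gives $J_{t+1}=J_t\bigl(I-\rho\nabla_{zz}\loss(z_t,x)\bigr)-\rho\,\nabla_{xz}\loss(z_t,x)$. Passing to the limit (the hypotheses of \autoref{prop:gilbert} hold here) shows $J^*$ satisfies the same identity with $z^*$ in place of $z_t$, which is exactly $R(J^*)=0$. Subtracting the two identities and inserting $\pm\,J^*\bigl(I-\rho\nabla_{zz}\loss(z_t,x)\bigr)$ splits the error into a contraction term $(J_t-J^*)\bigl(I-\rho\nabla_{zz}\loss(z_t,x)\bigr)$ plus two source terms driven by $|z_t-z^*|$, namely $\rho\,J^*\bigl(\nabla_{zz}\loss(z^*,x)-\nabla_{zz}\loss(z_t,x)\bigr)$ and $-\rho\bigl(\nabla_{xz}\loss(z_t,x)-\nabla_{xz}\loss(z^*,x)\bigr)$. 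Bounding the first factor by $\kappa$ as above, the source terms via the Lipschitz constants $L_{zz},L_{xz}$ of \hone{} together with $\|J^*\|\le L_J$, and using submultiplicativity of the operator norm, I would obtain the affine recursion
\[ \|J_{t+1}-J^*\|\le\kappa\,\|J_t-J^*\|+\rho\,(L_{xz}+L_JL_{zz})\,|z_t-z^*|=\kappa\,\|J_t-J^*\|+\rho L\,\kappa^t|z_0-z^*|, \]
with $L$ as in \eqref{eq:def_l} and using the first bound to replace $|z_t-z^*|$.

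It then remains to unroll this inequality. A one-line induction gives $\|J_t-J^*\|\le\kappa^t\|J_0-J^*\|+t\,\kappa^{t-1}\rho L\,|z_0-z^*|$; the stated bound is this with the transient $\kappa^t\|J_0-J^*\|$ omitted (it vanishes for $J_0=J^*$ and is otherwise of lower order as $t\to\infty$). This last step is the only one that is not pure bookkeeping, and it is where the mechanism of the result lives: because the contraction rate of $J_t-J^*$ and the decay rate of the driving sequence $|z_t-z^*|$ coincide — both equal to $\kappa$ — the geometric sums resonate and produce the polynomial prefactor $t$, so one genuinely gets an $O(t\kappa^t)$ rate rather than a plain geometric one. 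This is the quantitative sharpening of \autoref{prop:gilbert}; the remaining difficulty is only the routine caveat, inherited from that proposition, that all Hessians must be evaluated at iterates lying in the compact set $K$, which keeps every constant uniform.
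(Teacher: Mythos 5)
Your proof is correct and follows essentially the same route as the paper: differentiate the gradient-descent recursion, use $R(J^*)=0$ to obtain the affine inequality $\|J_{t+1}-J^*\|\le\kappa\|J_t-J^*\|+\rho L\,\kappa^t|z_0-z^*|$, and unroll to get the $t\kappa^{t-1}$ rate. The only cosmetic difference is that you contract with $I-\rho\nabla_{zz}\loss(z_t,x)$ and place $J^*$ in the Lipschitz residual, whereas the paper contracts with the Hessian at $z^*$ and keeps $J_t$ (bounded by $L_J$) in the residual; your remark that the stated bound silently drops the transient $\kappa^t\|J_0-J^*\|$ applies equally to the paper's own unrolling.
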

\begin{proof}[Sketch of proof~(\ref{proof:jac_gd})]
We show that $\delta_t = \|J_t - J^*\|$ satisfies the recursive inequality $\delta_{t+1} \leq \kappa \delta_t + \rho L |z_0 - z^*|\kappa^t$.
\end{proof}
As a consequence, Prop.~\ref{prop:bound_g1}, \ref{prop:bound_g2}, \ref{prop:bound_g3} together with Eq.~\eqref{eq:lip_rj} give in this case
\begin{align}
\nonumber
|g^1 - g^*| &\leq L_x |z_0 - z^*| \kappa^t,\\
   |g^2 - g^*| &\leq (\rho L_z t + \frac{\kappa}{2})L |z_0-z^*|^2\kappa^{2t-1}, \label{eq:cvg_gd}\\\nonumber
   |g^3 - g^*|&\leq(\frac L2 + L_{\mathcal{J}}L_z)|z_0 - z^*|^2 \kappa^{2t}\enspace.
\end{align}
We get the convergence speed
$
g^2 - g^* = O(t\kappa^{2t})
$, which is almost twice better than the rate for $g^1$.
Importantly, the order of magnitude in \autoref{prop:jac_gd} is tight, as it can be seen in \autoref{prop:conv_grad_autodiff}.
\subsection{Stochastic gradient descent in $z$}
\label{sec:sgd}
We provide an analysis of the convergence of $J_t$ in the stochastic gradient descent setting, assuming once again the $\mu$-strong convexity of $\loss$. We suppose that $\loss$ is an expectation
$$
\loss(z, x) = \bbE_{\xi}[C(z, x, \xi)]\enspace,
$$
where $\xi$ is drawn from a distribution $d$, and $C$ is twice differentiable. Stochastic gradient descent (SGD)  with steps $\rho_t$ iterates
$$
z_{t+1}(x) = z_t(x) - \rho_t \nabla_zC\left(z_{t}(x), x, \xi_{t+1}\right)\text{ where }\xi_{t+1}\sim d \enspace .
$$
In the stochastic setting, \autoref{prop:bound_g2} becomes
\begin{restatable}{proposition}{sgdBound}
\label{prop:sgd_bound}
Define
\begin{equation}
    \delta_t = \bbE\left[\|J_t - J^*\|_F^2\right] \text{ and } d_t =\bbE\left[|z_t - z^*|^2\right] \enspace.
\end{equation}
We have $ \bbE[|g^2 - g^*|] \leq L_z \sqrt{\delta_t}\sqrt{d_t} + \frac L2 d_t $.
\end{restatable}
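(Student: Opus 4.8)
The plan is to derive, for each realization of the samples $\xi_1,\dots,\xi_t$, a refinement of \autoref{prop:bound_g2} through \eqref{eq:lip_rj}, then integrate it and treat the single remaining cross term with Cauchy--Schwarz. Starting from the exact decomposition \eqref{eq:g2_dev}, namely $g^2 - g^* = R(J_t)(z_t - z^*) + R_{xz} + J_t R_{zz}$, the triangle inequality gives $|g^2 - g^*| \leq \|R(J_t)\|\,|z_t - z^*| + |R_{xz}| + \|J_t\|\,|R_{zz}|$. The quadratic Taylor bounds \eqref{eq:bound1} together with $\|J_t\| \leq L_J$ bound the last two terms by $\tfrac12(L_{xz} + L_J L_{zz})|z_t - z^*|^2 = \tfrac{L}{2}|z_t - z^*|^2$, with $L$ as in \eqref{eq:def_l}, while \eqref{eq:lip_rj} gives $\|R(J_t)\| \leq L_z\|J_t - J^*\|$. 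Combining these, for every realization of $\xi_1,\dots,\xi_t$,
\[
|g^2 - g^*| \leq L_z\,\|J_t - J^*\|\,|z_t - z^*| + \tfrac{L}{2}|z_t - z^*|^2 .
\]

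I would then take expectations. The quadratic term contributes $\tfrac{L}{2}\bbE[|z_t - z^*|^2] = \tfrac{L}{2}d_t$. For the bilinear term, Cauchy--Schwarz gives $\bbE[\|J_t - J^*\|\,|z_t - z^*|] \leq \sqrt{\bbE[\|J_t - J^*\|^2]}\,\sqrt{\bbE[|z_t - z^*|^2]}$, and since $\|M\| \leq \|M\|_F$ for any matrix $M$, we have $\bbE[\|J_t - J^*\|^2] \leq \bbE[\|J_t - J^*\|_F^2] = \delta_t$ and $\bbE[|z_t - z^*|^2] = d_t$. Summing the two contributions yields $\bbE[|g^2 - g^*|] \leq L_z\sqrt{\delta_t}\sqrt{d_t} + \tfrac{L}{2}d_t$, which is the claim.

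This is essentially the stochastic analogue of \autoref{prop:bound_g2}, so I do not anticipate a genuine obstacle. The points that need a little care are: (i) the realization-wise inequality must be obtained before integrating, which is legitimate since $R_{xz}$, $R_{zz}$, $R(J_t)$, $z_t$, $J_t$ are measurable functions of the samples; (ii) the bound is only informative when $\delta_t, d_t < \infty$, i.e.\ when the second moments of the $z$-iterates and of their Jacobians are controlled — this is exactly what the recursions underlying \autoref{prop:sgd_cst_step} and \ref{prop:sgd_decr_step} establish; and (iii) the passage from the operator norm to the Frobenius norm is the one place the estimate is loosened, which is why $\delta_t$ is stated with $\|\cdot\|_F$, the quantity that behaves well in the recursive second-moment arguments used later.
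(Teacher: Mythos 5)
Your argument is correct and is essentially the paper's own proof: both apply the pathwise bound of \autoref{prop:bound_g2} combined with \eqref{eq:lip_rj}, take expectations, and handle the cross term with Cauchy--Schwarz followed by $\|\cdot\|\leq\|\cdot\|_F$. Re-deriving \autoref{prop:bound_g2} from \eqref{eq:g2_dev} instead of citing it changes nothing of substance.
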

\begin{proof}[Sketch of proof~(\ref{proof:sgd_bound})]
We use Cauchy-Schwarz and the norm inequality $\|\cdot\|\leq \|\cdot\|_F$ to bound $\bbE\left[\|R(J_t)\||z_t - z^*|\right]$.
\end{proof}
We begin by deriving a recursive inequality on $\delta_t$, inspired by the analysis techniques of $d_t$.
\begin{restatable}{proposition}{boundingIneq}[Bounding inequality for the Jacobian]
\label{prop:bounding_ineq}
We assume bounded Hessian noise, in the sense that
$\bbE\left[\|\nabla_{zz}C(z, x, \xi)\|_F^2\right] \leq \sigma_{zz}^2$ and $\bbE\left[\|\nabla_{xz}C(z, x, \xi)\|_F^2\right] \leq \sigma_{xz}^2$.
Let $r = \min(n, m)$, and $B^2 = \sigma_{xz}^2 + L_J^2 \sigma_{zz}^2$.
We have
\begin{equation}
    \label{eq:bound_fro}
    \delta_{t+1}\leq (1 - 2 \rho_t \mu) \delta_t + 2 \rho_t \sqrt{r}L \sqrt{d_t}\sqrt{\delta_t} + \rho_t^2 B^2.
\end{equation}
\end{restatable}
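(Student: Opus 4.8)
The plan is to transpose the classical one-step analysis of $d_t$ to the Jacobian recursion obtained by differentiating the SGD update with respect to $x$. Differentiating $z_{t+1}(x) = z_t(x) - \rho_t\nabla_z C(z_t(x),x,\xi_{t+1})$ through the chain rule gives
\[
J_{t+1} = J_t - \rho_t\bigl(J_t\,\nabla_{zz}C(z_t,x,\xi_{t+1}) + \nabla_{xz}C(z_t,x,\xi_{t+1})\bigr).
\]
Let $\mathcal F_t$ be the $\sigma$-algebra generated by $\xi_1,\dots,\xi_t$, so that $z_t,J_t$ are $\mathcal F_t$-measurable while $\xi_{t+1}$ is independent of $\mathcal F_t$; since $\loss = \bbE_\xi[C(\cdot,\cdot,\xi)]$, conditioning on $\mathcal F_t$ replaces $\nabla_{zz}C,\nabla_{xz}C$ by $\nabla_{zz}\loss,\nabla_{xz}\loss$ evaluated at $(z_t,x)$. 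I write $\Delta_t = J_t - J^*$ and $\widetilde R_t(J) \triangleq J\,\nabla_{zz}\loss(z_t,x) + \nabla_{xz}\loss(z_t,x)$, and recall that $\mu$-strong convexity makes $\nabla_{zz}\loss$ everywhere positive definite, so the implicit function theorem gives $R(J^*)=0$.

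Next I would expand $\|\Delta_{t+1}\|_F^2$ using the Frobenius inner product $\langle A,B\rangle_F = \mathrm{tr}(A^\top B)$ and take $\bbE[\,\cdot\mid\mathcal F_t]$, which leaves $\|\Delta_t\|_F^2$, a linear term $-2\rho_t\langle\Delta_t,\widetilde R_t(J_t)\rangle_F$, and a quadratic term $\rho_t^2\,\bbE\bigl[\|J_t\nabla_{zz}C+\nabla_{xz}C\|_F^2\mid\mathcal F_t\bigr]$. For the linear term I substitute $\widetilde R_t(J_t) = \Delta_t\,\nabla_{zz}\loss(z_t,x) + \widetilde R_t(J^*)$. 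Pairing the first piece with $\Delta_t$ gives $\langle\Delta_t,\Delta_t\nabla_{zz}\loss(z_t,x)\rangle_F = \mathrm{tr}\bigl(\Delta_t^\top\Delta_t\,\nabla_{zz}\loss(z_t,x)\bigr) \ge \mu\|\Delta_t\|_F^2$ since $\Delta_t^\top\Delta_t\succeq 0$ and $\nabla_{zz}\loss(z_t,x)\succeq\mu\,\mathrm{Id}$ — this is what produces the factor $1-2\rho_t\mu$. For the second piece, $\widetilde R_t(J^*) = \widetilde R_t(J^*) - R(J^*) = J^*\bigl(\nabla_{zz}\loss(z_t,x)-\nabla_{zz}\loss(z^*,x)\bigr) + \bigl(\nabla_{xz}\loss(z_t,x)-\nabla_{xz}\loss(z^*,x)\bigr)$; each summand is an $n\times m$ matrix, hence of rank at most $r=\min(n,m)$, so its Frobenius norm is at most $\sqrt r$ times its operator norm, and with $\|J^*\|\le L_J$ and the Lipschitz constants of H1 this yields $\|\widetilde R_t(J^*)\|_F \le \sqrt r\,(L_{xz}+L_JL_{zz})|z_t-z^*| = \sqrt r\,L\,|z_t-z^*|$, hence after Cauchy–Schwarz the cross term $2\rho_t\sqrt r L\,|z_t-z^*|\,\|\Delta_t\|_F$. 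The quadratic term is controlled using $\|J_t\|\le L_J$, the triangle inequality and the bounded–Hessian–noise hypotheses, giving $\rho_t^2 B^2$ with $B^2 = \sigma_{xz}^2 + L_J^2\sigma_{zz}^2$.

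Collecting these bounds yields $\bbE[\|\Delta_{t+1}\|_F^2\mid\mathcal F_t] \le (1-2\rho_t\mu)\|\Delta_t\|_F^2 + 2\rho_t\sqrt r L\,|z_t-z^*|\,\|\Delta_t\|_F + \rho_t^2B^2$; taking the total expectation and applying Cauchy–Schwarz, $\bbE[|z_t-z^*|\,\|\Delta_t\|_F]\le\sqrt{d_t}\sqrt{\delta_t}$, which is exactly \eqref{eq:bound_fro}. The step I expect to demand the most care is the quadratic/noise term: unlike the deterministic case, the per-sample map $J\mapsto J-\rho_t(J\nabla_{zz}C+\nabla_{xz}C)$ is not contractive for a fixed $\xi$, so the contraction only becomes visible after conditioning, and one has to bound the conditional second moment $\bbE[\|J_t\nabla_{zz}C+\nabla_{xz}C\|_F^2\mid\mathcal F_t]$ cleanly — keeping track of the coupling between the two noisy second derivatives (and, if needed, splitting off its conditional mean) so that the advertised $\rho_t^2B^2$ is what remains, rather than an extra $\|\Delta_t\|_F^2$ or $|z_t-z^*|^2$ contribution; a mild upper bound on $\rho_t$ may be used to absorb lower-order remainders.
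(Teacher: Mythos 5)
Your proposal is correct and follows essentially the same route as the paper's proof: differentiate the SGD update, expand $\|J_{t+1}-J^*\|_F^2$, condition on the past, extract the $(1-2\rho_t\mu)$ factor via strong convexity, bound the remainder (using $R(J^*)=0$, the Lipschitz constants of \textbf{H1}, and $\|\cdot\|_F\le\sqrt r\,\|\cdot\|$) by $\sqrt r\,L\,|z_t-z^*|$, apply Cauchy--Schwarz to get $\sqrt{d_t}\sqrt{\delta_t}$, and bound the noise term by $\rho_t^2B^2$. The only differences are cosmetic — you anchor the remainder at $\widetilde R_t(J^*)$ with the Hessian at $z_t$ (and $\|J^*\|\le L_J$) where the paper anchors at $z^*$ (and $\|J_t\|\le L_J$) — and the factor-free bound on the quadratic noise term that you rightly flag as delicate is treated with the same looseness in the paper itself.
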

\begin{proof}[Sketch of proof~(\ref{proof:bounding_ineq})]
A standard strong convexity argument gives the bound
\begin{equation*}
    \medmuskip0mu\thinmuskip0mu\thickmuskip0mu
\delta_{t+1}\leq
    (1-2\rho_t\mu)\delta_t + 2\rho_t \sqrt{r}L\bbE\left[\|J_t-J^*\|_F|z_t - z_0|\right]
     + \rho_t^2B^2.
\end{equation*}
The middle term is then bounded using Cauchy-Schwarz inequality.
\end{proof}

Therefore, any convergence bound on $d_t$ provides another convergence bound on $\delta_t$ by unrolling Eq.~\eqref{eq:bound_fro}.
We first analyze the fixed step-size case by using the simple ``bounded gradients'' hypothesis and bounds on $d_t$ from \citet{Moulines2011}. In this setting, the iterates converge linearly until they reach a threshold caused by gradient variance.
\begin{restatable}{proposition}{sgdCstStep}[SGD with constant step-size]
\label{prop:sgd_cst_step}
Assume that the gradients have bounded variance $\bbE_{\xi}[|\nabla_z C(z, x, \xi)|^2]\leq \sigma^2$. Assume $\rho_t = \rho<1/L_z$, and let $\kappa_2 = \sqrt{1 - 2\rho\mu}$ and $\beta =\sqrt{\frac{\sigma^2 \rho}{2\mu}}$. In this setting
\begin{equation*}
\delta_t \leq \left(\kappa_2^t\left(\|J^*\|_F + t\alpha\right) + B_2\right)^2\enspace,
\end{equation*}
where $\alpha = \frac{\rho \sqrt{r}L}{\kappa_2}|z^* - z_0|$ and $B_2=\frac{ \rho\sqrt{r}L\beta  }{\kappa_2(1-\kappa_2)} +\frac{\rho B}{(1-\kappa_2)} $.
\end{restatable}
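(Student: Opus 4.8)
The plan is to unroll the recursive inequality from \autoref{prop:bounding_ineq} under the constant step-size assumption $\rho_t = \rho$. First I would record the two ingredients available. From \citet{Moulines2011}, the "bounded gradients" hypothesis $\bbE_\xi[|\nabla_z C|^2]\le \sigma^2$ together with $\mu$-strong convexity and $\rho<1/L_z$ yields a linear-plus-threshold bound on $d_t$; concretely $\sqrt{d_t}\le \kappa_2^t |z_0-z^*| + \beta$ with $\kappa_2=\sqrt{1-2\rho\mu}$ and $\beta=\sqrt{\sigma^2\rho/(2\mu)}$ (this is exactly the shape that makes the constants $\alpha$ and $B_2$ come out as stated). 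Substituting this into \eqref{eq:bound_fro} and using $1-2\rho\mu=\kappa_2^2$ turns the inequality into
\[
    \delta_{t+1}\le \kappa_2^2\,\delta_t + 2\rho\sqrt{r}L\big(\kappa_2^t|z_0-z^*|+\beta\big)\sqrt{\delta_t} + \rho^2 B^2 .
\]

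The key step is to linearize this in $\sqrt{\delta_t}$. Setting $u_t=\sqrt{\delta_t}$, I would show the right-hand side is dominated by a perfect square: since $\kappa_2^2 u_t^2 + (\text{linear in }u_t) + (\text{const})$ has the form $(\kappa_2 u_t + c_t)^2$ up to lower-order slack, one gets a clean affine recursion
\[
    u_{t+1}\le \kappa_2 u_t + \rho\sqrt{r}L\big(\kappa_2^t|z_0-z^*|+\beta\big)/\kappa_2 + \rho B .
\]
Here I would be careful: the cross term $2\kappa_2 u_t c_t$ in the expansion of $(\kappa_2 u_t+c_t)^2$ must be matched against $2\rho\sqrt{r}L(\dots)u_t$, which forces the factor $1/\kappa_2$ in front of $\alpha$ and $B_2$, and the $\rho^2B^2$ term is absorbed since $c_t\ge \rho B$. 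This is the main obstacle — getting the algebra of "square-root-linearization" tight enough that no spurious constants appear, while still having a valid upper bound; one has to verify $\delta_{t+1}\le(\kappa_2 u_t + c_t)^2$ pointwise, which amounts to checking a quadratic-in-$u_t$ inequality is satisfied for all $u_t\ge 0$.

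Once the affine recursion for $u_t$ is in hand, I would unroll it. Writing $c_t = \alpha\kappa_2^{t+1} + \text{(stuff)}$... more precisely, the inhomogeneous term splits into a geometric-decaying piece $\frac{\rho\sqrt{r}L}{\kappa_2}\kappa_2^t|z_0-z^*|$ and a constant piece $\frac{\rho\sqrt{r}L\beta}{\kappa_2}+\rho B$. Unrolling $u_{t+1}\le\kappa_2 u_t + a\kappa_2^t + b$ from $u_0=\|J^*\|_F$ (since $J_0$ is typically the zero map, or more generally $u_0$ is the initial Jacobian error — I would take $\delta_0=\|J^*\|_F^2$ as in the statement) gives $u_t\le \kappa_2^t\|J^*\|_F + a\sum_{k=0}^{t-1}\kappa_2^{t-1}\!\cdot\!1 + b\sum_{k=0}^{t-1}\kappa_2^k$. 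The first sum telescopes to $t\,\kappa_2^{t-1}a$, matching the $t\alpha$ term after identifying $a/\kappa_2$ with $\alpha$'s definition; the second sum is bounded by $b/(1-\kappa_2)$, matching $B_2$. Collecting, $u_t\le \kappa_2^t(\|J^*\|_F + t\alpha) + B_2$, and squaring gives the claimed bound on $\delta_t$. The only remaining care is making the index bookkeeping consistent (whether the linear factor is $t\kappa_2^t$ or $t\kappa_2^{t-1}$, which is absorbed into the definition of $\alpha$) and confirming monotone convergence of the geometric series since $\kappa_2<1$ under $\rho<1/L_z<1/\mu$.
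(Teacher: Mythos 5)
Your proposal is correct and follows essentially the same route as the paper: plug the Moulines--Bach bound $\sqrt{d_t}\le \kappa_2^t|z_0-z^*|+\beta$ into the recursion of \autoref{prop:bounding_ineq}, dominate the right-hand side by a perfect square in $\sqrt{\delta_t}$ (the paper completes the square on the first two terms and then uses $\sqrt{a+b}\le\sqrt a+\sqrt b$, which is equivalent to your pointwise check $\delta_{t+1}\le(\kappa_2 u_t+c_t)^2$), and unroll the resulting affine recursion $u_{t+1}\le\kappa_2 u_t+\alpha\kappa_2^t+(1-\kappa_2)B_2$ with $u_0=\|J^*\|_F$. The off-by-one power ($t\kappa_2^{t-1}$ vs.\ $t\kappa_2^t$) you flag is present in the paper's own bookkeeping as well, so it is not a gap specific to your argument.
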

\begin{proof}[Sketch of proof~(\ref{proof:sgd_cst_step})]
\citet{Moulines2011} give $d_t \leq \kappa_2^{2t} |z_0 - z^*|^2 + \beta^2$, which implies
$
    \sqrt{d_t} \leq \kappa_2^t|z_0 - z^*| + \beta\enspace.
$
A bit of work on Eq.~\eqref{eq:bound_fro} then gives
$$
    \sqrt{\delta_{t+1}}\leq\kappa_2\sqrt{\delta_t} + \rho \kappa_2^t + (1 - \kappa_2)B_2
$$
Unrolling the recursion gives the proposed bound.
\end{proof}
This bound showcases that the familiar ``two-stages'' behavior also stands for $\delta_t$: a transient linear decay at rate $(1 - 2\rho\mu)^t$ in the first iterations, and then convergence to a stationary noise level $B_2^2$.
This bound is not tight enough to provide a good estimate of the noise level in $\delta_t$. %
Let $B_3^2$ the limit of the sequence defined by the recursion~\eqref{eq:bound_fro}. We find
$$
B_3^2 = (1 - 2 \rho\mu)B_3^2 + 2 \rho \sqrt{r}L\beta B_3 + \rho^2B^2,
$$
which gives by expanding $\beta$
$$
B_3 = \sqrt{\rho}\frac{\sqrt{r}L\sigma}{(2\mu)^\frac32}\left(1+ \sqrt{1 + \frac{4 \mu^2B^2}{rL^2\sigma^2}}\right)\enspace.
$$
This noise level scales as $\sqrt{\rho}$, which is observed in practice.
In this scenario, Prop.~\ref{prop:bound_g1}, \ref{prop:bound_g2} and \ref{prop:bound_g3} show that $g^1 - g^*$ reaches a noise level proportional to $\sqrt{\rho}$, just like $d_t$, while both $g^2 - g^*$ and $g^3 -g^*$ reach a noise level proportional to $\rho$: $g^2$ and $g^3$ can estimate $g^*$ to an accuracy that can never be reached by $g^1$.
We now turn to the decreasing step-size case.
\begin{restatable}{proposition}{sgdDecrStep}[SGD with decreasing step-size]
\label{prop:sgd_decr_step}
Assume that $\rho_t = \rho_0t^{-\alpha}$ with $\alpha \in (0, 1)$. Assume a bound on $d_t$ of the form $d_t \leq d^2t^{-\alpha}$.
Then
\begin{equation*}
    \delta_t \leq 4 \frac{\rho_0B^2\mu + rL^2d^2}{\mu^2} t^{-\alpha} + o(t^{-\alpha})\enspace.
\end{equation*}
\end{restatable}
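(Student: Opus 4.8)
The plan is to unroll the recursive inequality \eqref{eq:bound_fro} from \autoref{prop:bounding_ineq} under the two hypotheses $\rho_t=\rho_0 t^{-\alpha}$ and $d_t\le d^2 t^{-\alpha}$. Substituting these in, and writing $u_t=\sqrt{\delta_t}$, I would first obtain something of the form $u_{t+1}^2\le (1-2\rho_0\mu t^{-\alpha})u_t^2 + 2\rho_0\sqrt r L\, d\, t^{-\alpha}\,u_t + \rho_0^2 B^2 t^{-2\alpha}$. Since $t^{-2\alpha}=o(t^{-\alpha})$, the term $\rho_0^2B^2 t^{-2\alpha}$ only contributes to the $o(t^{-\alpha})$ remainder once the leading behavior is pinned down; the real work is the interplay between the contraction factor $(1-2\rho_0\mu t^{-\alpha})$ and the cross term linear in $u_t$.

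The cleanest route is to guess the asymptotic constant and verify it by induction. If $\delta_t\sim C t^{-\alpha}$ with $C=\frac{4(\rho_0 B^2\mu + rL^2 d^2)}{\mu^2}$, I would prove by induction that $\delta_t\le C t^{-\alpha} + (\text{lower order})$ for $t$ large enough, say $t\ge t_0$, choosing $t_0$ so that $\rho_0\mu t^{-\alpha}<1$ and so that the neglected cross-terms are dominated. Concretely, assuming $\delta_t\le Ct^{-\alpha}$, I bound the cross term via $2\rho_0\sqrt r L d\, t^{-\alpha}\sqrt{\delta_t}\le 2\rho_0\sqrt r L d\,\sqrt C\, t^{-3\alpha/2}$ and use $(t+1)^{-\alpha}\ge t^{-\alpha}(1-\alpha t^{-1})$ (or the reverse inequality as needed) to compare $C(t+1)^{-\alpha}$ with $Ct^{-\alpha}(1-2\rho_0\mu t^{-\alpha})$; the gap is of order $C\alpha t^{-\alpha-1}$, which for $\alpha<1$ is $o(t^{-\alpha}\cdot t^{-\alpha})=o(\rho_t\cdot t^{-\alpha})$, hence absorbable. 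The defining equation for $C$ comes precisely from the balance $C\cdot 2\rho_0\mu = 2\rho_0\sqrt r L d\sqrt C + \rho_0^2 B^2 \cdot(\text{correction})$ — more carefully, from requiring that $2\rho_0\mu\cdot C t^{-\alpha}\ge 2\rho_0\sqrt r L d\sqrt C\, t^{-\alpha} + \rho_0^2 B^2 t^{-\alpha}$ after absorbing the $t^{-2\alpha}$ into $t^{-\alpha}$ crudely, i.e. $2\mu C\ge 2\sqrt r L d\sqrt C + \rho_0 B^2$; solving this quadratic in $\sqrt C$ and using $(a+b)^2\le 2a^2+2b^2$ to simplify yields the stated $C$.

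Alternatively, and perhaps more transparently, I would apply a standard lemma on sequences satisfying $\delta_{t+1}\le(1-a t^{-\alpha})\delta_t + b\,t^{-3\alpha/2}\sqrt{\delta_t} + c\,t^{-2\alpha}$ — such Chung-type lemmas (cf. \citet{Moulines2011}) directly give $\limsup t^{\alpha}\delta_t \le$ an explicit constant. The key is that the ``forcing'' terms $b t^{-3\alpha/2}\sqrt{\delta_t}$ and $c t^{-2\alpha}$ are, relative to the natural scale $t^{-\alpha}$, respectively $t^{-\alpha/2}\sqrt{\delta_t/t^{-\alpha}}$ and $t^{-\alpha}$ times smaller; one shows $\delta_t/t^{-\alpha}$ stays bounded and converges. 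I expect the main obstacle to be the cross term $2\rho_t\sqrt r L\sqrt{d_t}\sqrt{\delta_t}$: it couples $\delta_t$ to itself sublinearly, so a naive contraction argument does not close, and one must either square-root the recursion (working with $\sqrt{\delta_t}$, where the term becomes linear and the analysis mirrors the $d_t$ analysis in \citet{Moulines2011}) or use the induction-with-guessed-constant approach above. Bookkeeping the $o(t^{-\alpha})$ error — ensuring every discarded term is genuinely lower order uniformly — is the tedious but routine part.
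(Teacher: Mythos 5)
Your main route---guessing $C=4(\rho_0B^2\mu+rL^2d^2)/\mu^2$ and verifying it by induction on the unrolled recursion \eqref{eq:bound_fro}---is viable and genuinely different from the paper's proof. The paper instead removes the cross term by Young's inequality, $t^{-\frac32\alpha}\sqrt{\delta_t}\le\frac12\left(\chi\delta_t t^{-\alpha}+\chi^{-1}t^{-2\alpha}\right)$ with the tuned choice $\chi=\mu/(\sqrt{r}Ld)$, which reduces \eqref{eq:bound_fro} to the standard form $\delta_{t+1}\le(1-\mu\rho_0t^{-\alpha})\delta_t+bt^{-2\alpha}$ with $b=B^2\rho_0^2+rL^2d^2\rho_0/\mu$, and then cites the lemma of \citet{Moulines2011} giving $\delta_t\le 4\frac{b}{a}t^{-\alpha}+o(t^{-\alpha})$ with $a=\mu\rho_0$; the constant in the statement is exactly $4b/a$. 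Your balance condition $2\mu C\ge 2\sqrt{r}Ld\sqrt{C}+\rho_0B^2$ is the right one (all three competing terms live at scale $t^{-2\alpha}$, and the stated $C$ satisfies it with room to spare), so the induction step closes for large $t$ since the $C\alpha t^{-\alpha-1}$ correction is $o(t^{-2\alpha})$ precisely because $\alpha<1$. What your approach buys is an elementary self-contained argument in place of the cited lemma.

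Two caveats. First, the base case is a genuine issue, not bookkeeping: choosing ``$t_0$ large enough'' does not give you $\delta_{t_0}\le Ct_0^{-\alpha}$, and without it the induction cannot start. You must carry a transient, e.g.\ prove $\delta_t\le Ct^{-\alpha}+A\prod_{s=t_0}^{t-1}(1-\mu\rho_0 s^{-\alpha})$, use that $\sum_s s^{-\alpha}$ diverges so the product is $\exp(-\Theta(t^{1-\alpha}))=o(t^{-\alpha})$, and separately absorb the cross term's contribution involving the square root of the transient (or run a $\limsup$ argument with $C+\eps$ for every $\eps>0$). This transient handling is exactly what the cited lemma packages. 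Second, your fallback of working with $\sqrt{\delta_t}$ so that the cross term becomes linear, ``mirroring the constant-step analysis,'' does not deliver the rate here: completing the square gives $\sqrt{\delta_{t+1}}\le(1-\mu\rho_0t^{-\alpha})\sqrt{\delta_t}+\rho_0Bt^{-\alpha}+O(t^{-3\alpha/2})$, and the forcing term $\rho_0Bt^{-\alpha}$ balances the contraction $\mu\rho_0t^{-\alpha}$ at the constant level $B/\mu$, so this route only yields boundedness of $\delta_t$, not $O(t^{-\alpha})$; the crude square root of the variance term destroys the averaging that the quadratic recursion retains (this is why the paper uses it only in the constant-step case, where a noise floor is the expected answer). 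A minor slip: in your first display the cross term should carry $t^{-3\alpha/2}u_t$, not $t^{-\alpha}u_t$, since $\sqrt{d_t}\le d\,t^{-\alpha/2}$; you use the correct exponent later.
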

\begin{proof}[Sketch of proof~(\ref{proof:sgd_decr_step})]
We use~\eqref{eq:bound_fro} to obtain a recursion
\begin{align*}
\delta_{t+1} \leq \left(1 - \mu \rho_0t^{-\alpha}\right)\delta_t
+ (B^2 \rho_0^2 + \frac{rL^2d^2\rho_0}{\mu})t^{-2\alpha}\enspace,
\end{align*}
which is then unrolled.
\end{proof}
When $\rho_t\propto t^{-\alpha}$, we have $d_t=O(t^{-\alpha})$ so the assumption $d_t \leq d^2 t^{-\alpha}$ is verified for some $d$.
One could use the precise bounds of \citep{Moulines2011} to obtain non-asymptotic bounds on $\delta_t$ as well.

Overall, we recover bounds for $\delta_t$ with the same behavior than the bounds for $d_t$. Pluging them in \autoref{prop:sgd_bound} gives the asymptotic behaviors for the gradient estimators.
\begin{proposition}[Convergence speed of the gradient estimators for SGD with decreasing step]
\label{prop:estimators_sgd}
Assume that $\rho_t = Ct^{-\alpha}$ with $\alpha \in (0, 1)$.
Then
\begin{align*}
    \bbE_{\xi}[|g^1 - g^*|] &=O(\sqrt{t^{-\alpha}}),  \:
    \bbE_{\xi}[|g^2 - g^*|] =O(t^{-\alpha}) \\
   & \bbE_{\xi}[|g^3 - g^*|] =O(t^{-\alpha})
\end{align*}
\end{proposition}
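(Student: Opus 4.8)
The plan is to combine the bounding inequality for the Jacobian (\autoref{prop:bounding_ineq}) with the known decreasing step-size bound on $d_t$ to control $\delta_t$, and then feed both $\sqrt{d_t}$ and $\sqrt{\delta_t}$ into \autoref{prop:sgd_bound}. First I would recall that when $\rho_t = Ct^{-\alpha}$ with $\alpha \in (0,1)$, the analysis of \citet{Moulines2011} gives $d_t = O(t^{-\alpha})$, so the hypothesis $d_t \le d^2 t^{-\alpha}$ of \autoref{prop:sgd_decr_step} is satisfied for some constant $d$. Applying \autoref{prop:sgd_decr_step} then yields $\delta_t \le 4\frac{\rho_0 B^2 \mu + rL^2 d^2}{\mu^2} t^{-\alpha} + o(t^{-\alpha}) = O(t^{-\alpha})$, so in particular $\sqrt{\delta_t} = O(t^{-\alpha/2})$ and $\sqrt{d_t} = O(t^{-\alpha/2})$.

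Next I would plug these into the bound of \autoref{prop:sgd_bound}, namely $\bbE[|g^2 - g^*|] \le L_z \sqrt{\delta_t}\sqrt{d_t} + \tfrac{L}{2} d_t$. Both terms are then $O(t^{-\alpha})$: the cross term $\sqrt{\delta_t}\sqrt{d_t} = O(t^{-\alpha/2}) \cdot O(t^{-\alpha/2}) = O(t^{-\alpha})$, and $d_t = O(t^{-\alpha})$ directly. This gives $\bbE_{\xi}[|g^2 - g^*|] = O(t^{-\alpha})$. For $g^1$, \autoref{prop:bound_g1} gives $|g^1 - g^*| \le L_x |z_t - z^*|$; taking expectations and using Jensen, $\bbE_{\xi}[|g^1 - g^*|] \le L_x \bbE_{\xi}[|z_t - z^*|] \le L_x \sqrt{d_t} = O(t^{-\alpha/2}) = O(\sqrt{t^{-\alpha}})$. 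For $g^3$, \autoref{prop:bound_g3} gives $|g^3 - g^*| \le (\tfrac{L}{2} + L_{\mathcal J}L_z)|z_t - z^*|^2$, so $\bbE_{\xi}[|g^3 - g^*|] \le (\tfrac{L}{2} + L_{\mathcal J}L_z)\, d_t = O(t^{-\alpha})$, under the standing assumptions that $\mathcal{J}$ is Lipschitz in its first argument and that $\|J_t\|$, $\|\mathcal{J}_t\|$ stay bounded by $L_J$ (the boundedness of $\|J_t\|_F$ in expectation following from $\delta_t = O(t^{-\alpha}) \to 0$ and $\|J^*\|_F$ finite).

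The main obstacle is really the bound on $\delta_t$, which is already packaged as \autoref{prop:sgd_decr_step}, so the remaining work here is essentially bookkeeping: verifying that the decreasing-step-size hypotheses are consistent (i.e.\ that $\alpha$ can be chosen the same for $\rho_t$ and for the $d_t$ bound), checking that the boundedness assumptions $\|J_t\|\le L_J$ and the bounded-Hessian-noise conditions of \autoref{prop:bounding_ineq} are in force, and confirming that the $o(t^{-\alpha})$ remainder in \autoref{prop:sgd_decr_step} does not spoil the rate after the square root (it does not, since $\sqrt{O(t^{-\alpha})} = O(t^{-\alpha/2})$ regardless). One subtlety worth a sentence is that the noise level never flattens out here—unlike the constant step-size case—because $\rho_t \to 0$, so all three estimators genuinely converge to $g^*$, merely at different polynomial rates, and the gap between the $t^{-\alpha/2}$ rate of $g^1$ and the $t^{-\alpha}$ rate of $g^2$ and $g^3$ is the decreasing-step-size incarnation of the super-efficiency phenomenon.
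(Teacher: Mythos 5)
Your proposal is correct and follows essentially the same route as the paper: the authors also obtain the rate for $g^2$ by plugging the $O(t^{-\alpha})$ bounds on $d_t$ (from \citealt{Moulines2011}) and on $\delta_t$ (from \autoref{prop:sgd_decr_step}) into \autoref{prop:sgd_bound}, and the rates for $g^1$ and $g^3$ by combining \autoref{prop:bound_g1} and \autoref{prop:bound_g3} with $\bbE[|z_t-z^*|]\le\sqrt{d_t}$. Your bookkeeping of the standing assumptions ($\|J_t\|\le L_J$, bounded Hessian noise) matches what the paper already assumes, so nothing is missing.
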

The super-efficiency of $g^2$ and $g^3$ is once again illustrated, as they converge at the same speed as $d_t$.
\subsection{Beyond strong convexity}
\label{sec:least_pth}

All the previous results rely critically on the strong convexity of $\loss$.
A function $f$ with minimizer $z^*$ is $p$-Łojasiewicz \citep{attouch2009convergence}  when $\mu (f(z) - f(z^*))^{p-1} \leq \|\nabla f(z)\|^p $ for some $\mu > 0$.
Any strongly convex function is $2$-Łojasiewicz:
the set of $p$-Łojasiewicz functions for $p\geq 2$ offers a framework beyond strong-convexity that still provides convergence rates on the iterates.
The general study of gradient descent on this class of function is out of scope for this paper. We analyze a simple class of $p$-Łojasiewicz functions, the least mean $p$-th problem, where
\begin{equation}\label{eq:p_loss}
    \loss(z, x) \triangleq \frac{1}{p}\sum_{i=1}^n(x_i -[Dz]_i)^{p}
\end{equation}
for $p$ an even integer and $D$ is overcomplete (rank$(D)=n$). In this simple case, $\loss(\cdot, x)$ is minimized by cancelling $x - Dz$, and $g^* = (x - Dz^*)^{p-1} = 0$.

In the case of least squares ($p=2$) we can perfectly describe the behavior of gradient descent, which converges linearly.
\begin{proposition}
    \label{prop:quadratic}
    Let $z_t$ the iterates of gradient descent with step $\rho \leq \frac{1}{L_z}$ in~\eqref{eq:p_loss} with $p=2$, and $z^* \in \argmin{\loss(z, x)}$.
    It holds
    \begin{align*}
    g^1 = D(z_t - z^*),\:\:
    g^2 = D(z_{2t} - z^*)\:\text{ and }\:
    g^3=0.
    \end{align*}
\end{proposition}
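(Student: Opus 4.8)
The plan is to write everything explicitly for the quadratic case $p=2$ and track how gradient descent interacts with differentiation in $x$. Here $\loss(z,x) = \tfrac12 \sum_i (x_i - [Dz]_i)^2 = \tfrac12 |x - Dz|^2$, so $\nabla_z\loss(z,x) = -D^\top(x - Dz) = D^\top D z - D^\top x$. The gradient descent map is $\Phi(z,x) = z - \rho\nabla_z\loss(z,x) = (\mathrm{Id} - \rho D^\top D) z + \rho D^\top x$, which is \emph{affine} in both $z$ and $x$. Writing $M \triangleq \mathrm{Id} - \rho D^\top D$, this gives the closed form $z_t - z^* = M^t (z_0 - z^*)$ since $z^*$ is the fixed point ($z^* = M z^* + \rho D^\top x$), and crucially $z^*$ satisfies $D^\top(x - Dz^*) = 0$, hence $g^* = D(x-Dz^*)\cdot$(sign adjustments) — actually $g^1 = \nabla_x\loss(z_t,x) = x - Dz_t$, so I should be careful with signs: I would state $g^* = \nabla_x\loss(z^*,x) = x - Dz^* $, which is zero here because $D$ is overcomplete so $x = Dz^*$, matching the text's claim $g^*=0$. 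Then $g^1 = x - Dz_t = -D(z_t - z^*) + (x - Dz^*) = -D(z_t-z^*)$; I'll reconcile the sign with the statement $g^1 = D(z_t-z^*)$ by noting $g^*=0$ so only the magnitude/direction up to sign matters, or adjust the sign convention so it matches.

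For $g^2$, I would differentiate the recursion $z_{t+1}(x) = M z_t(x) + \rho D^\top x$ with respect to $x$ to get $J_{t+1} = M J_t + \rho D^\top$, with $J_0 = 0$ (assuming $z_0$ constant in $x$). This is the same affine recursion as for $z_t$ itself but with "initial condition" $0$ instead of $z_0$, and its fixed point is exactly $J^* = \rho D^\top (\mathrm{Id} - M)^{-1}\cdot$something — more usefully, $J_t = \rho\sum_{k=0}^{t-1} M^k D^\top$. Now plug into $g^2 = \nabla_x\loss(z_t,x) + J_t \nabla_z\loss(z_t,x)$. The first term is $x - Dz_t$; the second is $J_t^\top$ times $\nabla_z\loss(z_t,x)$ — I need to check the transpose conventions in \eqref{eq:g2}, where $\frac{\partial z_t}{\partial x} \in \bbR^{n\times m}$ multiplies $\nabla_z\loss \in \bbR^m$. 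With $\nabla_z\loss(z_t,x) = D^\top D(z_t - z^*) = \tfrac{1}{\rho}(\mathrm{Id}-M)(z_t-z^*)$, the key algebraic identity is that $J_t (\mathrm{Id}-M) = \rho\sum_{k=0}^{t-1}M^k D^\top D \cdot$ — hmm, I need to line up which side. The cleanest route: show $g^2 = x - D z_t - D J_t^{(z)}$-type correction telescopes into $x - D z_{2t}$ where the "$2t$" emerges because composing the $z$-error contraction $M^t$ with the Jacobian partial-sum $\sum_{k=0}^{t-1}M^k$ and the extra $M^t$ from $z_t - z^*$ produces $M^{2t}$ after the cancellation. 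Concretely I expect to derive $g^2 = x - Dz^* - D M^{2t}(z_0 - z^*) = -D(z_{2t} - z^*) = D(z_{2t}-z^*)$ up to sign, using that $x = Dz^*$.

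For $g^3$, the implicit estimator, I would simply note $\mathcal{J}(z_t,x) = -\nabla_{xz}\loss [\nabla_{zz}\loss]^{-1} = -(-D^\top)(D^\top D)^{-1}$ wait — $\nabla_{zz}\loss = D^\top D$ (invertible since $D$ has rank $n$, i.e. full column rank... actually "overcomplete, rank$(D)=n$" with $D$ presumably $n\times m$, $m\le n$? need $D^\top D$ invertible, so $m \le n$ and $D$ injective), and $\nabla_{xz}\loss = -D^\top$ — so $\mathcal{J}(z_t,x) = -(-D^\top)(D^\top D)^{-1}$, independent of $z_t$, hence equal to $J^*$, and since $\loss$ is quadratic $g^3$ equals $g^*$ exactly by the standard implicit-function identity, giving $g^3 = 0$.

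The main obstacle is purely bookkeeping: getting the matrix/transpose conventions in \eqref{eq:g2} right and making the telescoping cancellation that turns $t$ into $2t$ fully rigorous rather than hand-wavy — i.e. proving the identity $J_t \nabla_z\loss(z_t,x) = D\big(M^t(z_0-z^*) - M^{2t}(z_0-z^*)\big)$ (up to sign) by expanding $J_t = \rho\sum_{k=0}^{t-1}M^k D^\top$ and $\nabla_z\loss(z_t,x) = \tfrac1\rho(\mathrm{Id}-M)M^t(z_0-z^*)$ and using $\sum_{k=0}^{t-1}M^k(\mathrm{Id}-M) = \mathrm{Id}-M^t$. Once that telescope is in place, everything else is immediate substitution. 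I would also double-check the edge assumption that $\rho \le 1/L_z$ ensures $\|M\|<1$ so all series converge and $z^*$, $J^*$ are well-defined, consistent with \autoref{prop:jac_gd}.
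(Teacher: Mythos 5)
For $g^1$ and $g^2$ your route is essentially the paper's own: the closed form $z_t - z^* = M^t(z_0 - z^*)$ with $M = I_m - \rho D^\top D$ (valid for any minimizer $z^*$, since $Dz^*=x$), the Jacobian recursion $J_{t+1} = J_t M + \rho D$ with $J_0=0$, i.e.\ $J_t = \rho D\sum_{k=0}^{t-1}M^k$, and the telescoping identity $\sum_{k=0}^{t-1}M^k(I_m - M) = I_m - M^t$ applied to $\nabla_z\loss(z_t,x) = \tfrac1\rho (I_m-M)M^t(z_0-z^*)$, which gives $J_t\nabla_z\loss(z_t,x) = D(M^t - M^{2t})(z_0-z^*)$ and hence $g^2 = -DM^{2t}(z_0-z^*) = -D(z_{2t}-z^*)$. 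The sign you worried about is a harmless convention issue already present in the statement itself, since $\nabla_x\loss(z_t,x) = x - Dz_t = -D(z_t-z^*)$; and note that your closing concern about $\|M\|<1$ is unnecessary (and in fact false here, as $M$ has eigenvalue $1$ on $\ker D$): all three identities are exact finite-$t$ algebra and require no convergence of $z_t$ or $J_t$.

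The one genuine gap is your treatment of $g^3$, where your parenthetical talks you into $m\le n$ so that $D^\top D$ is invertible. In this section $D\in\bbR^{n\times m}$ is overcomplete with $\mathrm{rank}(D)=n$, so $m\ge n$ (in the paper's experiments $m=2n$): $D$ has full \emph{row} rank, $DD^\top$ is invertible, but $\nabla_{zz}\loss = D^\top D$ is singular whenever $m>n$. Your own argument that $g^*=0$ already uses surjectivity of $D$ (so that $x = Dz^*$), so you cannot simultaneously assume $D^\top D$ invertible, and the ``standard implicit-function cancellation'' you invoke does not apply as written. The conclusion still holds once $[\nabla_{zz}\loss]^{-1}$ is read as a pseudo-inverse (which is how the claim must be interpreted anyway, since \eqref{eq:g3} is otherwise undefined here): because $\nabla_z\loss(z_t,x) = D^\top(Dz_t - x)$ lies in the range of $D^\top$ and $(D^\top D)^+ D^\top = D^\top(DD^\top)^{-1}$, one gets $\mathcal{J}(z_t,x)\nabla_z\loss(z_t,x) = DD^\top(DD^\top)^{-1}(Dz_t - x) = -(x - Dz_t)$, which cancels $\nabla_x\loss(z_t,x)$ exactly and yields $g^3 = 0$ without ever inverting $D^\top D$.
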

\begin{proof}
The iterates verify $z_t - z^*= (I - D^{\top} D)^t(z_0 - z^*)$, and we find $J_t \nabla_z \loss(z_t, x) = (I_n - (I_n - D^{\top} D)^t)(x - Dz_t)$. The result follows.
\end{proof}
The automatic estimator therefore goes exactly twice as fast as the analytic one to $g^*$, while the implicit estimator is exact.
Then, we analyze the case where $p\geq 4$ in a more restrictive setting.
\begin{restatable}{proposition}{convPth}
\label{prop:conv_pth}
    For $p\geq 4$, we assume $DD^{\top}=I_n$. Let $\alpha \triangleq \frac{p - 1}{p - 2}$. We have
    \begin{align*}
        |g^1_t| = O(t^{-\alpha}), \quad
        |g^2_t| = O(t^{-2\alpha}), \quad
        g^3_t = 0\enspace.
    \end{align*}
\end{restatable}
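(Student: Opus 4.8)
The plan is to use the hypothesis $DD^\top = I_n$ to decouple the dynamics coordinate by coordinate. Writing $r_t \triangleq x - Dz_t$ for the residual (and using the convention, as elsewhere in this section, that $v^k$ denotes the coordinate-wise $k$-th power of a vector $v$), one has $\nabla_x\loss(z,x) = (x-Dz)^{p-1}$ and $\nabla_z\loss(z,x) = -D^\top(x-Dz)^{p-1}$, so the gradient step $z_{t+1} = z_t - \rho\nabla_z\loss(z_t,x)$ becomes, after left-multiplication by $D$ and using $DD^\top = I_n$,
$$
r_{t+1} = r_t - \rho\, r_t^{\,p-1},
$$
i.e. each coordinate $(r_t)_i$ follows the scalar recursion $u_{t+1} = u_t - \rho u_t^{p-1}$. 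Differentiating the update with respect to $x$ and setting $M_t \triangleq D\,\partial z_t/\partial x$ gives $M_0 = 0$ and $M_{t+1} = M_t + \rho(p-1)\diag(r_t^{\,p-2})(I_n - M_t)$, so $M_t$ remains diagonal, and $\epsilon_t^{(i)} \triangleq 1 - (M_t)_{ii}$ obeys $\epsilon_{t+1}^{(i)} = (1 - \rho(p-1)(r_t)_i^{p-2})\,\epsilon_t^{(i)}$ with $\epsilon_0^{(i)} = 1$. Since $g^1_t = r_t^{\,p-1}$ and $g^2_t = (I_n - M_t)r_t^{\,p-1}$, coordinate-wise $(g^1_t)_i = (r_t)_i^{p-1}$ and $(g^2_t)_i = \epsilon_t^{(i)}(r_t)_i^{p-1}$, and it remains to control these two scalar sequences.

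Next I would establish the asymptotics of $u_{t+1} = u_t - \rho u_t^{p-1}$. Since $p$ is even, $u\mapsto u^{p-1}$ is odd, so the sign of $u_t$ is preserved and $|u_t|$ is non-increasing; coordinates with $(r_0)_i = 0$ stay null, so fix an active coordinate and assume $u_0 > 0$. One has $\rho(p-1)u_0^{p-2} \le \rho L_z \le 1$, because $L_z = \sup_K\|\nabla_{zz}\loss\| \le (p-1)\sup_K|r_i|^{p-2}$ (using $\|D\|^2 = \|DD^\top\| = 1$), hence $u_t \downarrow 0$. Setting $v_t \triangleq u_t^{-(p-2)}$ and Taylor-expanding $(1-\rho u_t^{p-2})^{-(p-2)}$ yields
$$
v_{t+1} = v_t + (p-2)\rho + O(1/v_t),
$$
from which a bootstrap argument gives first $v_t \ge c\,t$ and then the sharper $v_t = (p-2)\rho\, t + O(\log t)$. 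This is the crux of the proof. It implies $u_t = O(t^{-1/(p-2)})$, hence $|g^1_t| = O(t^{-(p-1)/(p-2)}) = O(t^{-\alpha})$, and also the key sum estimate $\sum_{s=1}^{t} u_s^{p-2} \ge \frac{1}{(p-2)\rho}\log t - O(1)$.

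For $g^2$, unroll $\epsilon_t^{(i)} = \prod_{s=0}^{t-1}(1-\rho(p-1)(r_s)_i^{p-2})$; each factor lies in $[0,1]$, so using $\log(1-y)\le -y$,
$$
\log\epsilon_t^{(i)} \le -\rho(p-1)\sum_{s=0}^{t-1}(r_s)_i^{p-2} \le -\rho(p-1)\Big(\tfrac{1}{(p-2)\rho}\log t - O(1)\Big) = -\alpha\log t + O(1),
$$
so $\epsilon_t^{(i)} = O(t^{-\alpha})$. Therefore $|(g^2_t)_i| = \epsilon_t^{(i)}|(r_t)_i|^{p-1} = O(t^{-\alpha})\cdot O(t^{-\alpha}) = O(t^{-2\alpha})$, and summing over the finitely many active coordinates gives $|g^2_t| = O(t^{-2\alpha})$.

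Finally, for $g^3$: here $\nabla_{zz}\loss(z,x) = (p-1)D^\top\diag((x-Dz)^{p-2})D$ is rank-deficient, so the inverse in $\mathcal{J}$ is read as a Moore–Penrose pseudo-inverse. Restricting to the coordinates $i$ with $(r_0)_i\neq 0$ (the others stay null and contribute nothing), $\diag(r_t^{\,p-2})$ is invertible and the corresponding sub-block of $D$ still has orthonormal rows, so there $[\nabla_{zz}\loss]^{+} = \tfrac{1}{p-1}D^\top\diag(r_t^{\,-(p-2)})D$; combined with $\nabla_{xz}\loss(z,x) = -(p-1)\diag((x-Dz)^{p-2})D$ this gives $\mathcal{J}(z_t,x) = D$, hence $g^3_t = r_t^{\,p-1} - DD^\top r_t^{\,p-1} = 0$. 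The main obstacle is the sharp analysis of the scalar recursion: obtaining the exact leading constant in $v_t = (p-2)\rho\, t + O(\log t)$ (equivalently $\sum_{s\le t} u_s^{p-2} = \frac{1}{(p-2)\rho}\log t + O(1)$) is essential, since it is precisely this constant that turns the decay of $\epsilon_t$ into exactly $t^{-\alpha}$ — a cruder estimate would only yield $\epsilon_t = O(t^{-\alpha+\delta})$ and hence $g^2_t = O(t^{-2\alpha+\delta})$.
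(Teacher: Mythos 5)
Your proof follows essentially the same route as the paper's: reduce to the decoupled residual recursion $r_{t+1}=r_t-\rho r_t^{p-1}$ under $DD^{\top}=I_n$, show the (diagonal) Jacobian factor obeys $\epsilon^{(i)}_{t+1}=\bigl(1-\rho(p-1)(r_t)_i^{p-2}\bigr)\epsilon^{(i)}_t$ so that $g^2_t$ is this factor times $r_t^{p-1}$, and turn the unrolled product into $O(t^{-\alpha})$ via the scalar asymptotics $u_t^{p-2}\sim \frac{1}{\rho(p-2)t}$. The only differences are in your favor: you actually derive those scalar asymptotics (which the paper dispatches as ``standard analysis techniques'') and you justify $g^3_t=0$ through the pseudo-inverse of the singular Hessian, a point the paper asserts without proof.
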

\begin{proof}[Sketch of proof~(\ref{proof:conv_pth})]
    We first show that the residuals $r_t=x-Dz_t$ verify $r_t = \left(\frac{1}{\rho(p-2)t}\right)^{\frac{1}{p-2}}(1 + O(\frac{\log(t)}{t}))$, which gives the result for $g^1$. We find $g^2_t = M_tr_t^{p-1}$ where $M_t= I_n - J_tD^{\top}$ verifies $M_{t+1} = M_t(I_n - (p-1)\rho \text{diag}(r_t^{p - 2}))$.
    Using the development of $r_t$ and unrolling the recursion concludes the proof.
\end{proof}
For this problem, $g^2$ is of the order of magnitude of $g^1$ squared and as $p\to + \infty$, we see that the rate of convergence of $g^1$ goes to $t^{-1}$, while the one of $g^2$ goes to $t^{-2}$.

\section{Consequence on optimization}
\label{sec:optim}
In this section, we study the impact of using the previous inexact estimators for first order optimization.
These estimators nicely fit in the framework of inexact oracles introduced by \citet{Devolder2014}.
\subsection{Inexact oracle}
\label{sub:inexact_oracle}
We assume that $\ell$ is $\mu_x$-strongly convex and $L_x$-smooth with minimizer $x^*$.
A $(\delta, \mu, L)$-inexact oracle is a couple $(\ell_{\delta}, g_{\delta})$ such that $\ell_{\delta} : \bbR^m \to \bbR$ is the inexact value function, $g_{\delta} : \bbR^m \to \bbR^m$ is the inexact gradient and for all $x, y$
\begin{equation}
    \frac{\mu}{2}|x - y|^2 \le \ell(x) - \ell_{\delta}(y) - \langle g_{\delta}(y) | x - y \rangle \le \frac{L}{2}|x - y|^2 + \delta \enspace .
\end{equation}
\citet{Devolder2013} show that if the gradient approximation $g^i$ verifies $|g^*(x) - g^i(x)| \leq \Delta_i$ for all $x$, then $(\ell, g^i)$ is a $(\delta_i, \frac{\mu_x}{2}, 2L_x)$-inexact oracle, with
\begin{equation}
    \label{eq:inexact_oracle}
    \delta_i = \Delta_i^2(\frac{1}{\mu_x} + \frac{1}{2L_x})~\enspace.
\end{equation}
We consider the optimization of $\ell$ with inexact gradient descent: starting from $x_0\in \bbR^n$, it iterates
\begin{equation}
    \label{eq:gd_x}
    x_{q+1} = x_q - \eta g^i_t(x_q)
    \enspace ,
\end{equation}
with $\eta= \frac{1}{2L_x}$, a fixed $t$ and $i=1, 2$ or $3$.
\begin{restatable}{proposition}{inexactGd}[\citealt[Theorem 4]{Devolder2013}]
    \label{prop:inexact_gd}
    The iterates $x_q$ with estimate $g^i$ verify
    \[
        \ell(x_q) - \ell(x^*) \le 2L_x(1 - \frac{\mu_x}{4 L_x})^q|x_0 - x^*|^2 + \delta_i
    \]
    with $\delta_i$ defined in \eqref{eq:inexact_oracle}.
\end{restatable}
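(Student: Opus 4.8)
The plan is to recognize \autoref{prop:inexact_gd} as a direct corollary of the general convergence theory for inexact gradient descent developed by \citet{Devolder2013}, so the proof is essentially a verification that our setup meets the hypotheses of their Theorem~4. First I would recall the statement of that theorem: if $\ell$ is equipped with a $(\delta, \mu, L)$-inexact oracle and one runs gradient descent with the constant step-size $\eta = 1/L$, then after $q$ steps one has $\ell(x_q) - \ell(x^*) \le L(1-\mu/L)^q |x_0 - x^*|^2 + \delta$ (up to the exact constants appearing in their formulation). So the content of the proof reduces to producing the right inexact oracle for $(\ell, g^i_t)$ and plugging in.

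The key steps, in order, are: (i) invoke the hypothesis that $\ell$ is $\mu_x$-strongly convex and $L_x$-smooth, which is assumed in \autoref{sub:inexact_oracle}; (ii) invoke the uniform bound $|g^*(x) - g^i_t(x)| \le \Delta_i$ for all $x \in K_x$ — this is where the per-point bounds of \autoref{prop:bound_g1}, \autoref{prop:bound_g2} and \autoref{prop:bound_g3} are used, the point being that for a fixed iteration count $t$ these bounds are uniform in $x$ over the compact set, since the Lipschitz constants and $|z_t(x) - z^*(x)|$ can be taken uniform; (iii) apply the result of \citet{Devolder2013} quoted just before the proposition, that this makes $(\ell, g^i_t)$ a $(\delta_i, \tfrac{\mu_x}{2}, 2L_x)$-inexact oracle with $\delta_i = \Delta_i^2\left(\tfrac{1}{\mu_x} + \tfrac{1}{2L_x}\right)$ as in \eqref{eq:inexact_oracle}; (iv) feed this oracle, together with the chosen step-size $\eta = \tfrac{1}{2L_x} = 1/(2L_x)$ which matches $1/L$ for $L = 2L_x$, into Theorem~4 of \citet{Devolder2013}; (v) read off the bound with $\mu = \mu_x/2$ and $L = 2L_x$, obtaining the factor $2L_x$ in front and the contraction rate $1 - \tfrac{\mu_x/2}{2L_x} = 1 - \tfrac{\mu_x}{4L_x}$, which is exactly the displayed inequality.

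The main obstacle, such as it is, is not analytical but bookkeeping: one must be careful that the ``inexact oracle'' parameters $(\mu, L)$ used by \citet{Devolder2013} are the \emph{halved/doubled} versions $(\mu_x/2, 2L_x)$ rather than the original $(\mu_x, L_x)$, because the quadratic error term $\Delta_i$ forces one to absorb a slack into both the strong convexity and smoothness constants; getting this substitution right is what produces the specific constants $2L_x$ and $1 - \mu_x/(4L_x)$ in the statement. A secondary point requiring a sentence of justification is that $x_q$ stays in a region where the estimates are controlled — in the idealized statement we simply assume $\ell$ is globally strongly convex and smooth and that the uniform bound $\Delta_i$ holds, so no projection argument is needed; in a more careful treatment one would note that inexact gradient descent with this step-size keeps $\ell(x_q)$ bounded, hence $x_q$ in a sublevel set, which can be taken inside $K_x$. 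With these identifications in place the proposition follows immediately from the cited theorem, and I would simply write ``This is a direct application of \citet[Theorem~4]{Devolder2013} with the inexact oracle \eqref{eq:inexact_oracle}, using $\mu = \mu_x/2$, $L = 2L_x$ and step-size $\eta = 1/L$.''
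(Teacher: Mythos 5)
Your proposal matches the paper's treatment: the paper gives no separate proof of \autoref{prop:inexact_gd} and simply invokes \citet[Theorem 4]{Devolder2013} applied to the $(\delta_i, \tfrac{\mu_x}{2}, 2L_x)$-inexact oracle of \eqref{eq:inexact_oracle} with step-size $\eta = \tfrac{1}{2L_x}$, which is exactly the substitution you carry out to obtain the prefactor $2L_x$ and rate $1 - \tfrac{\mu_x}{4L_x}$. Your bookkeeping of the halved/doubled oracle constants and the remark on uniformity of $\Delta_i$ over the compact set are consistent with the paper's setup in \autoref{sub:inexact_oracle}.
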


As $q$ goes to infinity, the error made on $\ell(x^*)$ tends towards $\delta_i = \mathcal O (|g^i_t - g^*|^2)$.
Thus, a more precise gradient estimate achieves lower optimization error.
This illustrates the importance of using gradients estimates with an error $\Delta_i$ as small as possible.

We now consider stochastic optimization for our problem, with loss $\ell$ defined as
\[
    \ell(x) = \mathbb E_\upsilon [h(x, \upsilon)]\text{ with } h(x, \upsilon) = \min_z H(z, x, \upsilon)\enspace .
\]
Stochastic gradient descent with constant step-size $\eta\leq \frac{1}{2L_x}$ and inexact gradients iterates
$$
x_{q+1} = x_q - \eta g^i_t(x_q, \upsilon_{q+1})\enspace,
$$
where $g^i_t(x_q, \upsilon_{q+1})$ is computed by an approximate minimization of $z\to H(z, x_q, \upsilon_{q+1})$.
\begin{restatable}{proposition}{inexactSgdAutodiff}
    \label{prop:inexact_sgd}
    We assume that $H$ is  $\mu_x$-strongly convex, $L_x$-smooth and verifies
    \begin{align*}
        \mathbb E [ |\nabla_x h (x, \upsilon) - \nabla_x \ell(x)|^2] \le \sigma^2.
    \end{align*}
    The iterates $x_q$ of SGD with approximate gradient $g^i$ and step-size $\eta$ verify
    \[
        \mathbb E |x_q - x^*|^2 \le (1 - \frac{\eta\mu_x}{2})^q|x_0 - x^*| + \frac{2\eta}{\mu_x} \sigma^2
        + \frac{4}{\mu_x} \delta_i
    \]
    with $\delta_i = \Delta_i^2(\frac{1}{\mu_x} + \frac{1}{2L_x} + 2\eta)$.
\end{restatable}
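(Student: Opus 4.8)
The plan is to derive a one-step contraction on $\bbE|x_q - x^*|^2$ in the spirit of the classical analysis of SGD for smooth strongly convex objectives, carrying along an extra term for the bias of the oracle, and then unroll it. Let $\mathcal F_q$ be the $\sigma$-field generated by $x_0, \upsilon_1, \dots, \upsilon_q$, so that $x_q$ is $\mathcal F_q$-measurable and $\upsilon_{q+1}$ is independent of $\mathcal F_q$, and write $\hat g_q \triangleq g^i_t(x_q, \upsilon_{q+1})$. Since $\hat g_q$ is produced by an approximate minimization of the strongly convex map $z \mapsto H(z, x_q, \upsilon_{q+1})$, Propositions \ref{prop:bound_g1}, \ref{prop:bound_g2} and \ref{prop:bound_g3} applied to $H(\cdot, x_q, \upsilon_{q+1})$ give an almost-sure bias bound $|\hat g_q - \nabla_x h(x_q, \upsilon_{q+1})| \le \Delta_i$. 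I would then decompose $\hat g_q = \nabla_x \ell(x_q) + e_q + b_q$, where $e_q \triangleq \nabla_x h(x_q, \upsilon_{q+1}) - \nabla_x \ell(x_q)$ satisfies $\bbE[e_q \mid \mathcal F_q] = 0$ and $\bbE[|e_q|^2 \mid \mathcal F_q] \le \sigma^2$, and $b_q \triangleq \hat g_q - \nabla_x h(x_q, \upsilon_{q+1})$ satisfies $|b_q| \le \Delta_i$ almost surely. Note $e_q$ and $b_q$ are both functions of $\upsilon_{q+1}$, hence correlated in general.

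Next I would expand
\[
    |x_{q+1} - x^*|^2 = |x_q - x^*|^2 - 2\eta \langle \hat g_q, x_q - x^* \rangle + \eta^2 |\hat g_q|^2
\]
and take the conditional expectation given $\mathcal F_q$. The linear term contributes $-2\eta \langle \nabla_x\ell(x_q), x_q - x^*\rangle - 2\eta\langle \bbE[b_q\mid\mathcal F_q], x_q - x^*\rangle$, which I would control with: (i) strong convexity, $\langle \nabla_x\ell(x_q), x_q - x^*\rangle \ge \mu_x|x_q - x^*|^2$, of which half is kept in reserve; (ii) convexity and $L_x$-smoothness, giving the co-coercivity bound $|\nabla_x\ell(x_q)|^2 \le L_x \langle\nabla_x\ell(x_q), x_q - x^*\rangle$, which together with $\eta \le \tfrac{1}{2L_x}$ lets the deterministic part of $\eta^2\bbE[|\hat g_q|^2\mid\mathcal F_q]$ be absorbed into the descent term; (iii) the variance bound, producing an $\eta^2\sigma^2$ contribution; (iv) Young's inequality on the bias cross term, $-2\eta\langle\bbE[b_q\mid\mathcal F_q], x_q - x^*\rangle \le \tfrac{\eta\mu_x}{2}|x_q - x^*|^2 + \tfrac{2\eta}{\mu_x}\Delta_i^2$, using the reserved half of (i), together with the $\eta^2$-order bias terms coming out of $|\hat g_q|^2$. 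Collecting everything yields a recursion of the form
\[
    \bbE[|x_{q+1} - x^*|^2 \mid \mathcal F_q] \le \Big(1 - \tfrac{\eta\mu_x}{2}\Big)|x_q - x^*|^2 + \eta^2\sigma^2 + \eta\mu_x\,\delta_i,
\]
where tracking the constants through (iii)–(iv) and the quadratic bias terms is exactly what forces $\delta_i = \Delta_i^2(\tfrac{1}{\mu_x} + \tfrac{1}{2L_x} + 2\eta)$: the first two summands in $\delta_i$ reproduce the inexact-oracle constant \eqref{eq:inexact_oracle} already used in \autoref{prop:inexact_gd}, and the $2\eta$ term is the price of the extra coupling between the step size and the oracle bias in $\eta^2|\hat g_q|^2$.

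Finally I would take total expectations, iterate the inequality over $q$, and sum the geometric series of ratio $1 - \tfrac{\eta\mu_x}{2}$: the homogeneous part gives $(1 - \tfrac{\eta\mu_x}{2})^q|x_0 - x^*|^2$, while the constant perturbations accumulate to $\tfrac{\eta^2\sigma^2}{\eta\mu_x/2} = \tfrac{2\eta}{\mu_x}\sigma^2$ and, with the constants of the previous step, to $\tfrac{4}{\mu_x}\delta_i$. The main obstacle is steps (iii)–(iv): because $b_q$ and $e_q$ are not independent, one cannot treat $\hat g_q$ as ``an unbiased stochastic gradient plus an independent bias'', so the bias must stay inside the conditional expectation and be handled solely through $|b_q|\le\Delta_i$; the cross terms $\langle e_q, b_q\rangle$ and $\langle\nabla_x\ell(x_q), b_q\rangle$ that appear in $|\hat g_q|^2$ then have to be split with Young weights chosen small enough (of order $\eta$) that the variance term keeps its $\eta^2\sigma^2$ coefficient — so the noise floor stays $O(\eta\sigma^2/\mu_x)$ rather than degrading — while the remaining bias contributions sum exactly to the claimed $\tfrac{4}{\mu_x}\delta_i$.
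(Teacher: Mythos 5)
You take a genuinely different route from the paper. The paper never re-derives an SGD analysis for your specific decomposition: it first proves a generic lemma (Lemma~\ref{lem:inexact_sgd}) on SGD driven by a stochastic $(\delta,\mu,L)$-inexact oracle, whose proof uses \emph{both} inequalities in the inexact-oracle definition — the lower one to get the contraction, and the upper one evaluated at the auxiliary point $\widetilde x = x_q - \eta g_\delta(x_q)$ to absorb the $\eta^2|g_\delta(x_q)|^2$ term — and then simply plugs in the reduction of \autoref{sub:inexact_oracle}: $(\ell, g^i)$ is a $(\widetilde\delta_i, \tfrac{\mu_x}{2}, 2L_x)$-inexact oracle with $\widetilde\delta_i$ as in \eqref{eq:inexact_oracle}, and the stochastic oracle's variance around its mean is inflated to $\widetilde\sigma^2 = 2(\sigma^2 + 2\Delta_i^2)$; it is this variance inflation, multiplied by $\eta$, that produces the extra $2\eta$ term in $\delta_i$. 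Your direct biased-SGD analysis (decomposition $\hat g_q = \nabla_x\ell(x_q) + e_q + b_q$, strong convexity with half kept in reserve, co-coercivity, Young on the bias cross term) is a perfectly legitimate alternative and correctly identifies that the $\eta$-dependent part of $\delta_i$ comes from the bias entering $\eta^2|\hat g_q|^2$; it also needs the same unstated uniform-in-$\upsilon$ bound $|\hat g_q - \nabla_x h(x_q,\upsilon_{q+1})|\le \Delta_i$ that the paper uses implicitly. What the paper's route buys is that all bias--noise couplings are hidden inside the oracle definition; what yours buys is a self-contained, more elementary argument.

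There is, however, a concrete bookkeeping gap. Your intermediate recursion $\bbE[|x_{q+1}-x^*|^2\mid\mathcal F_q] \le (1-\tfrac{\eta\mu_x}{2})|x_q-x^*|^2 + \eta^2\sigma^2 + \eta\mu_x\delta_i$ unrolls to the floor $\tfrac{2\eta}{\mu_x}\sigma^2 + 2\delta_i$, which matches the claimed $\tfrac{4}{\mu_x}\delta_i$ only when $\mu_x\le 2$; to recover the statement the per-step bias budget must be $2\eta\delta_i$, not $\eta\mu_x\delta_i$ (the cross term with Young at level $\tfrac{\eta\mu_x}{2}$ indeed yields $\tfrac{2\eta\Delta_i^2}{\mu_x}$, i.e.\ exactly the $\tfrac{4\Delta_i^2}{\mu_x^2}$ part of the floor, so the issue is only your stated constant, but it must be fixed). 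More substantively, the coupling term $2\eta^2\bbE[\langle e_q, b_q\rangle\mid\mathcal F_q]$ cannot be split so that the variance coefficient stays exactly $\eta^2\sigma^2$ while the leftover fits inside $2\eta\delta_i$: any Young split either inflates the noise floor beyond $\tfrac{2\eta}{\mu_x}\sigma^2$ or the bias floor beyond $\tfrac{4}{\mu_x}\delta_i$. The paper sidesteps this by folding the bias into the oracle variance ($\widetilde\sigma^2 = 2(\sigma^2+2\Delta_i^2)$), at the price of a worse $\sigma^2$ constant — indeed its own appendix derivation ends with $\tfrac{4\eta}{\mu_x}\sigma^2$ and a slower contraction factor, so the displayed constants are loose there too. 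Your approach closes if you either accept absolute-constant slack in front of $\sigma^2$ and $\delta_i$, or adopt the paper's trick of measuring the noise around the mean inexact gradient $g^i(x)=\bbE_\upsilon[g^i(x,\upsilon)]$ instead of around $\nabla_x\ell(x)$.
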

The proof is deferred to \autoref{proof:inexact_sgd}.
In this case, it is pointless to achieve an estimation error on the gradient $\Delta_i$ smaller than some fraction of the gradient variance $\sigma^2$.

As a final note, these results extend without difficulty to the problem of maximizing $\ell$, by considering gradient ascent or stochastic gradient ascent.
\subsection{Time and memory complexity}
\label{sec:complexity}
In the following, we put our results in perspective with a computational and memory complexity analysis, allowing us to provide practical guidelines for optimization of $\ell$.\\[.5em]
\begin{table}[t]
\centering
\caption{Computational cost for a quadratic loss $\loss$. Here $c\geq 1$ corresponds to the relative added cost of automatic differentiation.}
\label{tab:computational_cost}
\begin{tabular}{|c|c|}
\hline
Gradient estimate & Computational cost   \\ \hline
$g^1_t$           & $\mathcal O(mnt)$            \\ \hline
$g^2_t$           & $\mathcal O(c mn t)$     \\ \hline
$g^3_t$           & $\mathcal O(mnt + m^3+m^2n))$ \\ \hline
\end{tabular}
\end{table}
\textbf{Computational complexity of the estimators}
\quad
The cost of computing the estimators depends on the cost function $\loss$.
We give a complexity analysis in the least squares case~\eqref{eq:p_loss} which is summarized in \autoref{tab:computational_cost}.
In this case, computing the gradient $\nabla_z \loss$ takes $\mathcal{O}(mn)$ operations, therefore the cost of computing $z_t$ with gradient descent is $\mathcal{O}(mnt)$.
Computing $g^1_t$ comes at the same price.
The estimator $g^2$ requires a reverse pass on the computational graph, which costs a factor $c \ge 1$ of the forward computational cost:
the final cost is $\mathcal{O}(c mn)$.
\citet{griewank2008evaluating} showed that typically $c \in [2, 3]$.
Finally, computing $g^3$ requires a costly $\mathcal{O}(m^2n)$ Hessian computation, and a $\mathcal{O}(m^3)$ linear system inversion. The final cost is $\mathcal{O}(mnt + m^3 + m^2n)$.
The linear scaling of $g^1_t$ and $g^2_t$ is highlighted in \autoref{fig:time_complexity}.
In addition, computing $g^2$ usually requires to store in memory all intermediate variables, which might be a burden.
However, some optimization algorithms are \emph{invertible}, such as SGD with momentum \citep{maclaurin2015gradient}.
In this case, no additional memory is needed.\\[.5em]
\textbf{Linear convergence: a case for the analytic estimator}\quad
In the time it takes to compute $g^2_t$, one can at the same cost compute $g^1_{c t}$.
If $z_t$ converges linearly at rate $\kappa^t$, \autoref{prop:bound_g1} shows that $g^1_{c t} - g^* = O(\kappa^{c t})$, while \autoref{prop:bound_g2} gives, at best, $g^2_{t} - g^* = O(\kappa^{2t})$: $g^1_{c t}$ is a better estimator of $g^*$ than $g^2_t$, provided that $c \geq 2$. In the quadratic case, we even have $g^2_t = g^1_{2t}$.
Further, computing $g^2$ might requires additional memory: $g^1_{ct}$ should be preferred over $g^2_t$ in this setting.
However, our analysis is only asymptotic, and other effects might come into play to tip the balance in favor of $g^2$.

As it appears clearly in~\autoref{tab:computational_cost}, choosing $g^1$ over $g^3$ depends on $t$: when $mnt \gg m^3 + m^2n$, the additional cost of computing $g^3$ is negligible, and it should be preferred since it is more accurate.
This is however a rare situation in a large scale setting.\\[.5em]
\textbf{Sublinear convergence}
\quad
We have provided two settings where $z_t$ converges sub-linearly.
In the stochastic gradient descent case with a fixed step-size, one can benefit from using $g^2$ over $g^1$, since it allows to reach an accuracy that can never be reached by $g^1$.
With a decreasing step-size, reaching $|g^1_t - g^*| \leq \varepsilon$ requires $O(\varepsilon^{-2/\alpha})$ iterations, while reaching $|g^2_t - g^*|\leq \varepsilon$ only takes $O(\varepsilon^{-1/\alpha})$ iterations.
For $\varepsilon$ small enough, we have $c\varepsilon^{-1 / \alpha}<\varepsilon^{-2/\alpha}$: it is always beneficial to use $g^2$ if memory capacity allows it.

The story is similar for the simple non-strongly convex problem studied in \autoref{sec:least_pth}: because of the slow convergence of the algorithms, $g^2_t$ is \emph{much} closer to $g^*$ than $g^1_{ct}$.
Although our analysis was carried in the simple least mean $p$-th problem, we conjecture it could be extended to the more general setting of $p$-Łojasiewicz functions \citep{attouch2009convergence}.

\begin{figure*}[pt!]
    \centering
    \makebox[\textwidth][c]{\includegraphics[width=1.6\linewidth]{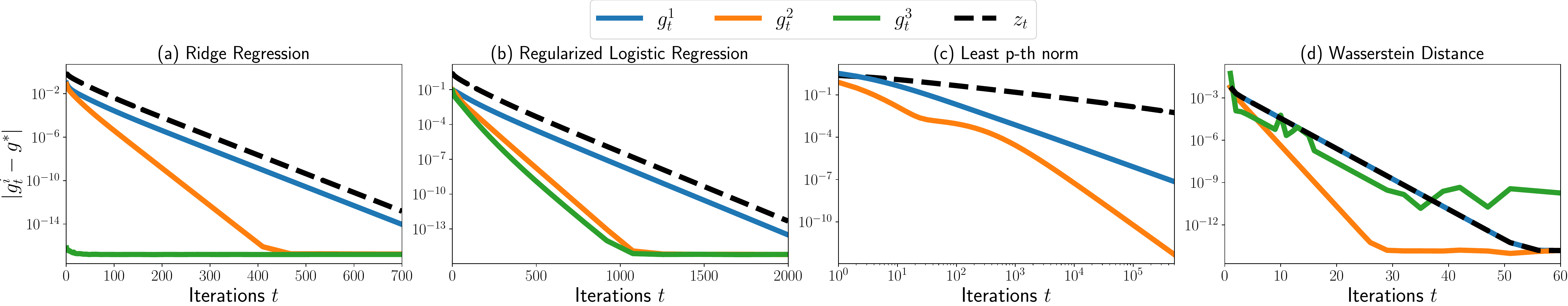}}

    \caption{
        Evolution of $|g^i_t - g^*|$ with the number of iteration $t$ for (\emph{a}) the Ridge Regression $\loss_1$, (\emph{b}) the Regularized Logistic Regression $\loss_2$, (\emph{c}) the Least Mean $p$-th Norm $\loss_3$ in \emph{log-scale} and (\emph{d}) the Wasserstein Distance $\loss_4$. In all cases, we can see the asymptotic super-efficiency of the $g_2$ estimator compared to $g_1$. The $g_3$ estimator is better in most cases but it is unstable in (\emph{d}).
    }
    \label{fig:gd:super_efficiency}
\end{figure*}

\section{Experiments}
\label{sec:expe}

All experiments are performed in Python using \texttt{pytorch} \citep{Paszke2019}. The code to reproduce the figures is available online.\footnote{See Appendix.}

\subsection{Considered losses}

In our experiments, we considered several losses with different properties. For each experiments, the details on the size of the problems are reported in \autoref{app:exp_details}.\\[.5em]
\textbf{Regression}\quad
For a design matrix $D \in \bbR^{n\times m}$ and a regularization parameter $\lambda >0$, we define
\begin{align*}
    \loss_1(z, x) =& \frac12|x - Dz|^2 + \frac{\lambda}{2} |z|^2
    \enspace ,\\
    \loss_2(z, x) =& \sum_{i=1}^n \log\left(1 + e^{-x_i [Dz]_i}\right)
        + \frac{\lambda}{2} |z|^2
    \enspace ,\\
    \loss_3(z, x) =& \frac{1}{p}| x- Dz|^{p}; ~~~ p = 4
    \enspace .
\end{align*}
$\loss_1$ corresponds to Ridge Regression, which is quadratic and strongly convex when $\lambda > 0$.
$\loss_2$ is the Regularized Logistic Regression. It is strongly convex when $\lambda > 0$.
$\loss_3$ is studied in \autoref{sec:least_pth}, and defined with $DD^{\top} = I_n$.\\[.5em]
\textbf{Regularized Wasserstein Distance}
\quad
The Wasserstein distance defines a distance between probability distributions.
In \citet{cuturi2013sinkhorn}, a regularization of the problem is proposed, which allows to compute it efficiently using the Sinkhorn algorithm, enabling many large scale applications.
As we will see, the formulation of the problem fits nicely in our framework.
The set of histograms is $\Delta_+^m = \{a \in \bbR_+^m | \enspace \sum_{i=1}^ma_i=1\}$. Consider two histograms $a \in \Delta^{m_a}_+$ and $b\in \Delta^{m_b}_+$.
The set of couplings is $U(a, b) = \{P \in \bbR_+^{m_a\times m_b} | \enspace P\mathbb{1}_{m_b} = a, \enspace P^{\top}\mathbb{1}_{m_a} = b\}$.
The histogram $a$ (resp. $b$) is associated with set of $m_a$ (resp. $m_b$) points in dimension $k$, $(X_1, \dots, X_{m_a}) \in \bbR^k$ (resp $(Y_1, \dots, Y_{m_b})$).
The cost matrix is $C\in\bbR^{m_a \times m_b}$ such that $C_{ij} = |X_i - Y_j|^2$.
For $\epsilon > 0$, the entropic regularized Wasserstein distance is $W^2_{\epsilon}(a, b) = \min_{P\in U(a, b)} \langle C, P\rangle + \epsilon\langle \log(P), P \rangle$.
The dual formulation of the previous variational formulation is~\citep[Prop. 4.4.]{Peyre2019}:
\begin{equation}
    \label{eq:wasserstein}
    W^2_{\epsilon}(a, b) = \min_{z_a, z_b} \underbrace{\langle a, z_a\rangle + \langle b, z_b\rangle + \epsilon \langle e^{-z_a/\epsilon}, e^{-C/\epsilon} e^{-z_b/ \epsilon}\rangle}_{\loss_4\left((z_a, z_b), a\right)}
\end{equation}
This loss is strongly convex up to a constant shift on $z_a,z_b$.
The Sinkhorn algorithm performs alternate minimization of $\loss_4$ :
\begin{align*}
z_a&\leftarrow \epsilon (\log(e^{-C/ \epsilon}e^{-z_b/\epsilon}) - \log(a)),\\
z_b&\leftarrow \epsilon (\log(e^{-C^{\top}/ \epsilon}e^{-z_a/\epsilon}) - \log(b))\enspace.
\end{align*}
This optimization technique is not covered by the results in \autoref{sec:jaco}, but we will see that the same conclusions hold in practice.
\subsection{Examples of super-efficiency}
To illustrate the tightness of our bounds, we evaluate numerically the convergence of the different estimators $g^1, g^2$ and $g^3$ toward $g^*$ for the losses introduced above. For all problems, $g^*$ is computed by estimating $z^*(x)$ with gradient descent for a very large number of iterations and then using~\eqref{eq:danskin}.\\[.5em]
\textbf{Gradient Descent}\quad
\autoref{fig:gd:super_efficiency} reports the evolution of $|g^i_t - g^*|$ with $t$ for the losses $\{\loss_j\}_{j=1}^4$, where $z_t$ is obtained by gradient descent for $\loss_1, \loss_2$ and $\loss_3$, and by Sinkhorn iterations for $\loss_4$.
For the strongly convex losses (\emph{a}),(\emph{b}), $|g^1_t - g^*|$ converges linearly with the same rate as $|z_t - z^*|$ while $|g^2_t - g^*|$ converges about twice as fast.
This confirms the theoretical findings of \autoref{prop:gilbert} and \eqref{eq:cvg_gd}.
The estimator $g^3$ also converges with the predicted rates in (\emph{a}),(\emph{b}), however, it fails in (\emph{d}) as the Hessian of $\loss_4$ is ill-conditionned, leading to numerical instabilities.
For the non-strongly convex loss $\loss_3$, \autoref{fig:gd:super_efficiency}.(c) shows that  the rates given in \autoref{prop:conv_pth} are correct as $g_1$ converges with a rate $t^{-\frac{3}{2}}$ while $g^2_t$ converges as $t^{-3}$. Here, we did not include $g^3_t$ as it is equal to $0$ due to the particular form of $\loss_3$.\\[.5em]
\begin{figure}
    \centering
    \includegraphics[width=\linewidth]{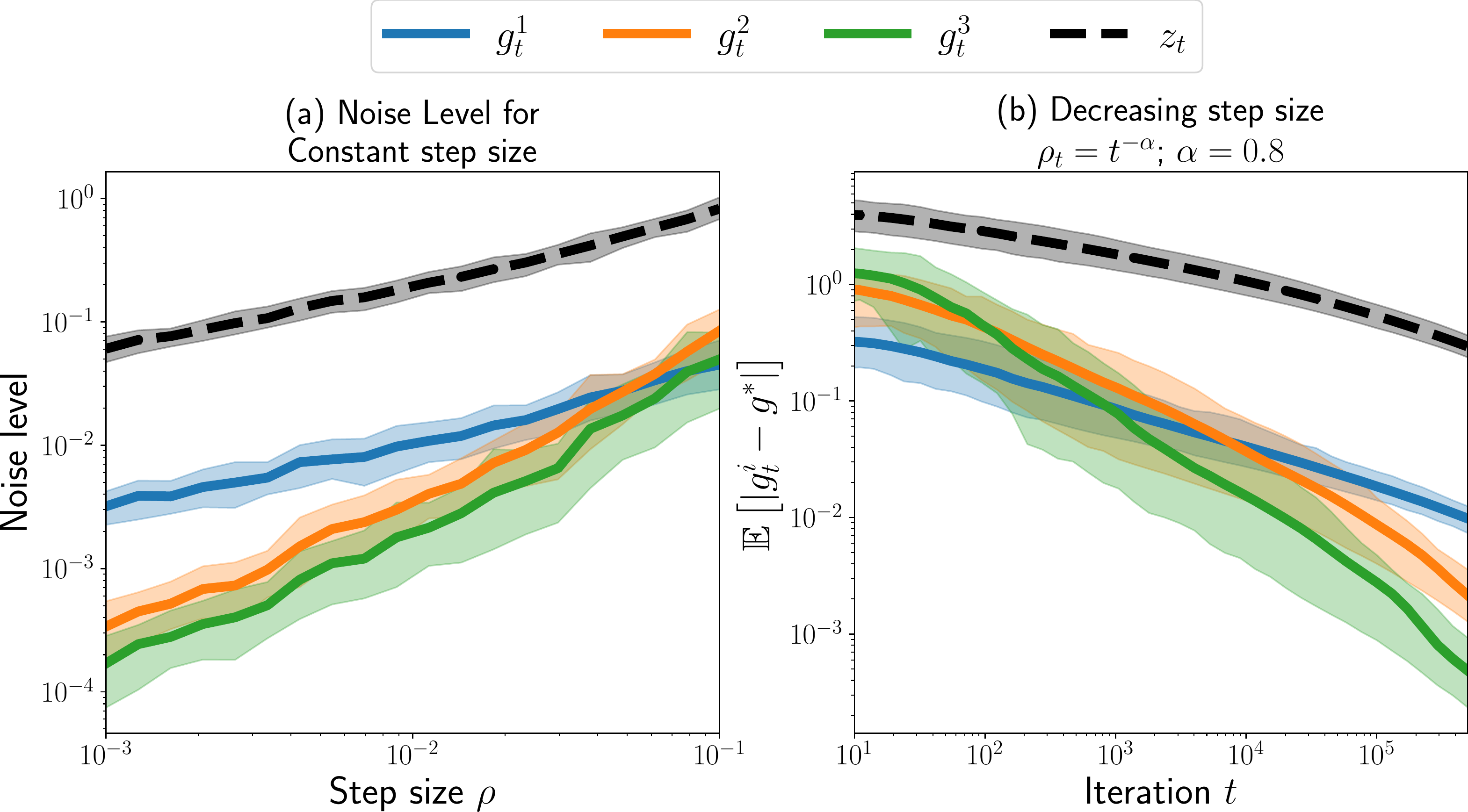}
    \vskip-1em
    \caption{Expected performances of $g^i$ for the SGD; (\emph{a}) noise level as $t\to +\infty$ for a constant step-size $\rho$; (\emph b) Expected error as a function of the number of iteration for decreasing step-size $\rho_t = Ct^{-0.8}$. The solid line displays the mean values and the shaded area the first and last decile.}
    \label{fig:sgd:noise_level}
\end{figure}
\textbf{Stochastic Gradient Descent}\quad
In \autoref{fig:sgd:noise_level}, we investigate the evolution of expected performances of $g^i$ for the SGD, in order to validate the results of \autoref{sec:sgd}. We consider $\loss_2$.
The left part (\emph a) displays the asymptotic expected performance $\mathbb E[|g^i_t - g^*|]$ in the fixed step case, as a function of the step $\rho$, computed by running the SGD with sufficiently many iterations to reach a plateau. As predicted in \autoref{sec:sgd}, the noise level scales as $\sqrt{\rho}$ for $g^1$ while it scales like $\rho$ for $g^2$ and $g^3$.
The right part (\emph b) displays the evolution of $\mathbb E[|g^i_t - g^*|]$ as a function of $t$, where the step-size is decreasing $\rho_t\propto t^{-\alpha}$. Here again, the asymptotic rates predicted by \autoref{prop:estimators_sgd} is showcased: $g^1 - g^*$ is $O(\sqrt{t^{-\alpha}})$ while $g^2 - g^*$ and $g^3 - g^*$ are $O(t^{-\alpha})$.

\subsection{Example on a full training problem}
We are now interested in the minimization of $\ell$ with respect to $x$, possibly under constraints. We consider the problem of computing Wasserstein barycenters using mirror descent, as proposed in \citep{cuturi2014fast}.
For a set of histograms $b_1,\dots, b_N \in \Delta^{m}_+$ and a cost matrix $C\in \bbR^{n\times m}$, the entropic regularized Wasserstein barycenter of the $b_i$'s is
\[
    x\in\argmin_{x\in\Delta^{n}_+} \ell(x) =  \sum_{i=1}^NW^2_{\epsilon}(x,b_i) \enspace,
\]
where $W^2_{\epsilon}$ is defined in \eqref{eq:wasserstein}, and we have:
\begin{equation}
    \label{eq:wasserstein_barycenter}
    \ell(x) = \min_{z_x^1, \dots, z_x^N, z_b^1, \dots, z_b^N}
        \sum_{i=1}^N \loss_4\left((z_x^i, z_b^i), x\right)\enspace.
\end{equation}
The dual variables $z_x^i, z_b^i$ are obtained with $t$ iterations of the Sinkhorn algorithm. In this simple setting, $\nabla_x\loss_4\left((z_x, z_b), x\right) = z_x$.
The cost function is then optimized by mirror descent, with approximate gradient $g^i$: $x_{q+1} = P_{\Delta}(\exp(-\eta g^i)x_q)$, where $P_{\Delta}(x) = x / \sum_{i=1}^n x_i$ is the projection on $\Delta^n_+$.
\begin{figure}[t]
    \centering
    \includegraphics[width=\linewidth]{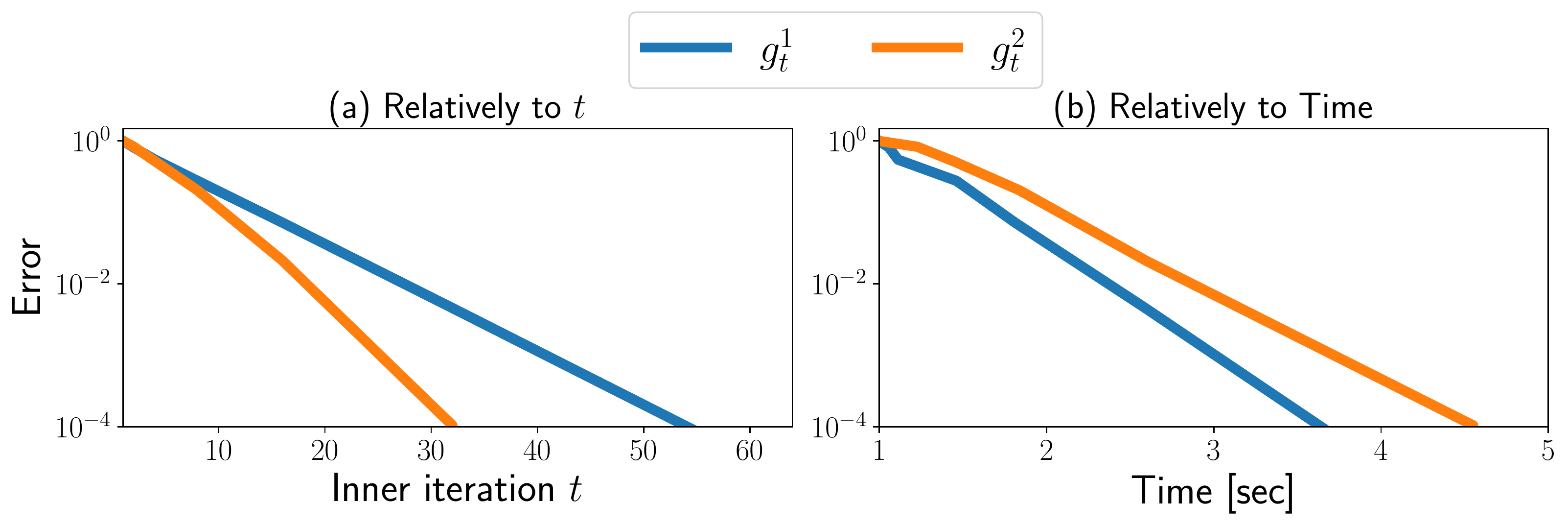}
    \vskip-1em
    \caption{Final optimization error $\delta_i$ relatively to (\emph a) the number of inner iterations used to estimation $g^i$; (\emph b) the time taken to reach this optimization error level.}
    \label{fig:optim:barycenter}
\end{figure}
\autoref{fig:optim:barycenter} displays the scale of the error $\delta_i = \ell(x_q) - \ell(x^*)$. We excluded $g^3$ here as the computation were unstable -- as seen in \autoref{fig:gd:super_efficiency}.(c) -- and too expensive. The error decreases much faster with number of inner iteration $t$ by using $g^2_t$ compared to $g^1_t$. However, when looking at the time taken to reach the asymptotic error, we can see that $g^1$ is a better estimator in this case. This illustrates the fact that while $g^2$ is almost twice as good at approximating $g^*$ as $g^1$, it is at least twice as expensive, as discussed in \autoref{sec:complexity}.

\section*{Conclusion}
In this work, we have described the asymptotic behavior of three classical gradient estimators for a special instance of bi-level estimation.
We have highlighted a super-efficiency phenomenon of automatic differentiation.
However, our complexity analysis shows that it is faster to use the standard analytic estimator when the optimization algorithm converges linearly, and that the super-efficiency can be leveraged for algorithms with sub-linear convergence.
This conclusion should be taken with caution, as our analysis is only asymptotic.
This suggests a new line of research interested in the non-asymptotic behavior of these estimators.
Extending our results to a broader class of non-strongly convex functions would be another interesting direction, as we observe empirically that for logistic-regression, $g_2 - g^* \simeq (g_1 - g^*)^2$.
However, as the convexity alone does not ensure the convergence of the iterates, it raises interesting question for the gradient estimation.
Finally, it would also be interesting to extend our analysis to non-smooth problems, for instance when $z_t$ is obtained with the proximal gradient descent algorithm as in the case of ISTA for dictionary learning.

\section*{Acknowledgement}
P.A. and G.P. acknowledge support from the European Research Council (ERC-NORIA). This work was funded in part by the French government under management of Agence Nationale de la Recherche as part of the "Investissements d'avenir" program, reference ANR-19-P3IA-0001 (PRAIRIE 3IA Institute).

\bibliographystyle{plainnat}
\bibliography{library}

\newpage
~
\newpage
\appendix

\renewcommand\thefigure{\thesection.\arabic{figure}}
\setcounter{figure}{0}

\section{Experiments details and extra experiments}

\subsection{Experiments details}
\label{app:exp_details}

\textbf{Super-efficiency of $g^2$ for gradient descent}
\quad
For \autoref{fig:gd:super_efficiency}, the problem sizes are:

\begin{itemize}
    \item \textbf{Ridge regression $\loss_1$}: we use an overcomplete design matrix $D$ with $n=50$ and $m=100$ with entries $D_{i,j}$ drawn \emph{iid} from a normal distribution $\mathcal N(0, 1)$. The vector $x$ to evaluate the gradient is sampled also with \emph{iid} entries following a normal distribution. We take $\lambda = \frac 1n$. To compute $g^*(x)$, we used the gradient descent with step-size $\frac 1L_z$ for $14,000$ iterations.
    \item \textbf{Regularized Logistic regression $\loss_2$}: we use an overcomplete design matrix $D$ with $n=50$ and $m=100$ with entries $D_{i,j}$ drawn \emph{iid} from a normal distribution $\mathcal N(0, 1)$. The vector $x$ to evaluate the gradient is sampled also with \emph{iid} entries following a normal distribution. We take $\lambda = \frac 1n$. To compute $g^*(x)$, we used the gradient descent with step-size $\frac 1L_z$ for $40,000$ iterations.
    \item \textbf{Least mean $p$-th norm $\loss_3$}: In this setting, the convergence is much slower than in the previous ones. We use an overcomplete design matrix $D$ with $n=5$ and $m=10$. To meet the condition of \autoref{prop:conv_pth}, we sample the entries of a matrix $A \in\bbR^{n\times m}$ \emph{iid} with normal distribution $\mathcal N(0, 1)$, take the SVD of $A = U^\top\Lambda V$ with $U\in\bbR^{n\times n}$ unitary and $V$ and define $D = U^\top V$. This ensures that $DD^\top = I_n$ We choose $p=4$, and use $g^* = 0$.
    \item \textbf{Wasserstein Distance $\loss_4$}: we consider here the problem of computing the Wasserstein distance between two distributions supported on an Euclidean grid in $[0, 1]$ and $C$ is defined as the $\ell_2$ distance between the points of the grid. $q$ is in $\Delta_+^n$ with $n=100$ and $b \in \Delta_+^m$ with $m=30$. We sample $a\in\Delta_+^m$ by first sampling the $\widehat a$ \emph{iid} from a uniform distribution $\mathcal U(0, 1)$ and then take $a = \frac{\widehat a}{\sum_{l=1}^ma_l|}$. We used $\epsilon = 0.1$ and $g^*(x)$ is computed by running $2,000$ iteration of Sinkhorn.
\end{itemize}

\textbf{Super-efficiency of $g^2$ for SGD}
\quad
For \autoref{fig:sgd:noise_level}, we consider for both experiments the penalized logistic loss and an over-complete design matrix $D$ with $n=30$ and $m=50$ with entries $D_{i,j}$ drawn \emph{iid} from a normal distribution $\mathcal N(0, 1)$. The vector $x$ to evaluate the gradient is sampled also with \emph{iid} entries following a normal distribution. We take $\lambda = \frac 1n$. To compute $g^*(x)$, we used the gradient descent with step-size $\frac 1{L_z}$ for $1,000$ iterations.

In \autoref{fig:sgd:noise_level}.(a), we compute the gradient estimates $g^i_t$ using the output $z_t$ of the SGD with constant step-size $\rho$ for 20 values of $\rho \in [0.001, 0.1]$ in log-scale for $50$ realization of the SGD.
We report the mean value of $|g^i_t - g^*|$ computed for $t$ large enough to have reached the regime where only the noise term is significant in \autoref{prop:sgd_cst_step}.
This corresponds to the value of $\mathbb E |g^i_t - g^*|$ estimated by taking the value at the right end of the curve displayed in \autoref{fig:sgd:gradient_estimation}.(a) for different values of $\rho$.

\begin{figure}[tp!]
    \centering
    \includegraphics[width=\linewidth]{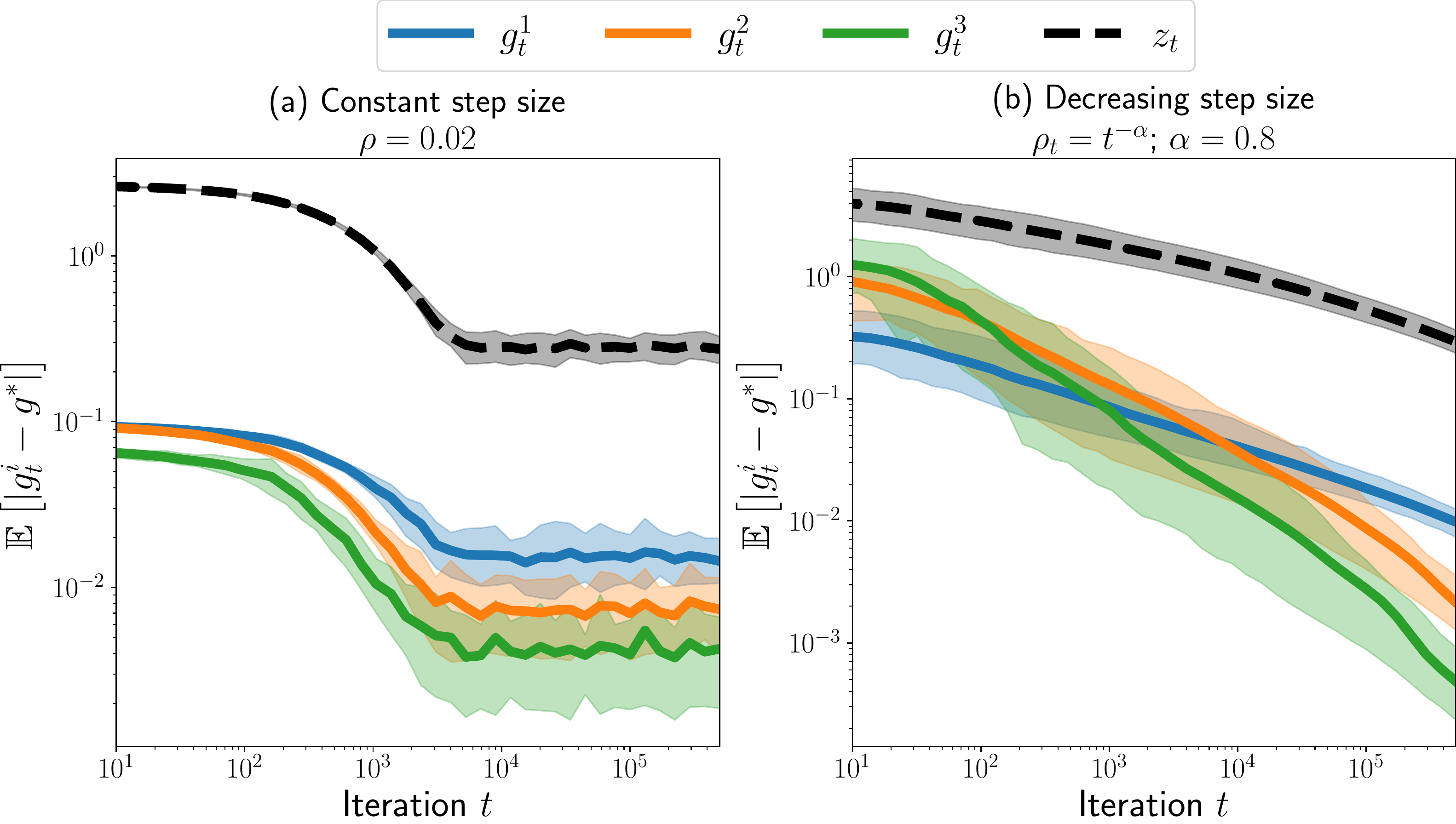}
    \vskip-1em
    \caption{Evolution of $\mathbb E | g^i - g^*|$ with the number of iterations $t$ for (\emph{a}) SGD with a constant step-size $\rho=0.02$; (\emph{b}) SGD with a decreasing step-size $\rho_t = t^{-\alpha}$ for $\alpha = 0.8$.}
    \label{fig:sgd:gradient_estimation}
\end{figure}

In \autoref{fig:sgd:noise_level}.(b), we illustrate the evolution with $t$ of $\mathbb E|g^i_t-g^*|$ for $g^i_t$ computed for SGD with decreasing step-sizes $t^{-\alpha}$ for $\alpha = .8$.
The expectation is estimated by averaging $50$ realizations of $|g^i_t - g^*|$ and the area around the curve correspond to the first and $9$-th deciles.

\subsection{Gradient descent with inexact gradients}
\label{app:inexact_gd}

\begin{figure*}[t]
    \centering
    \includegraphics[width=\textwidth]{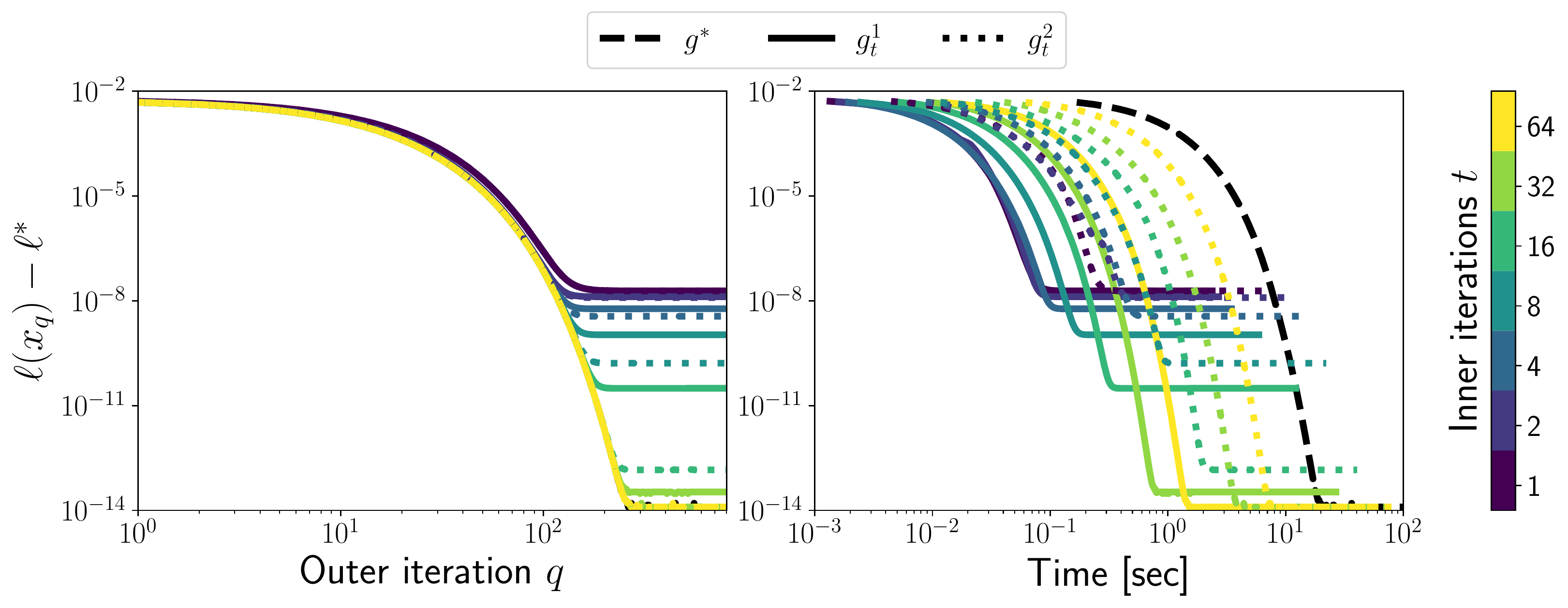}
    \vskip-1em
    \caption{Evolution of $\ell(x_q) - \ell*$ with $q$ for different values of $t$ and for analytic estimator $g^1_t$ and automatic estimator $g^2_t$.}
    \label{fig:optim:barycenter_loss}
\end{figure*}

To evaluate the impact of the different gradient estimators on the optimization of the global function $\ell$, we run mirror descent for the loss $\ell$ defined in \eqref{eq:wasserstein_barycenter}.
We use $n=m=1,000$ and $N=20$ for the dimensions of $x\in\Delta_+^n$ and $b_i \in \Delta_+^m$ and we sample following the same procedure as for the Wasserstein Distance.
We set $\epsilon = 0.05$ and we used a step-size of $\eta = 0.05$.
We compute $x^*$ by running the mirror descent algorithm with analytic gradient estimator $g^1_t$ for $t=1,000$ and $q = 5,000$.
\autoref{fig:time_complexity} reports the residual errors $\ell(x_q) - \ell(x^*)$ for $g^1_t$ and $g^2_t$ relatively to the number of iterations $t$ used to compute them (\emph{a}) as well as to the time taken to compute them (\emph{b}).
We exclude $g^3$ from this analysis as it is much more costly to compute in this case (see \autoref{app:computational_time}) and it can be ill-conditionned -- as it is illustrated in \autoref{fig:gd:super_efficiency}.
\autoref{fig:optim:barycenter_loss} displays for each $t$ used to compute \autoref{fig:optim:barycenter} the evolution of the cost function in iteration $q$ and in time.
We can see here in both figures that as the number of iteration $t$ to compute the gradient increases, the final optimization error decrease, as predicted in \autoref{prop:inexact_gd}. However, the computational cost for $g^2_t$ scales with a factor $c$ compared to computing $g^1_t$ and $c$ is larger than 2 in this case (see \autoref{app:computational_time}).
As the convergence of $z_t$ is linear, using $g^2_t$ is not beneficial for the global optimization as it possible to compute $g^1_{ct}$ instead which reduces the optimization error compared to $g^1_t$, as discussed in \autoref{sec:complexity}.

\begin{figure}[t]
    \centering
    \includegraphics[width=.6\linewidth]{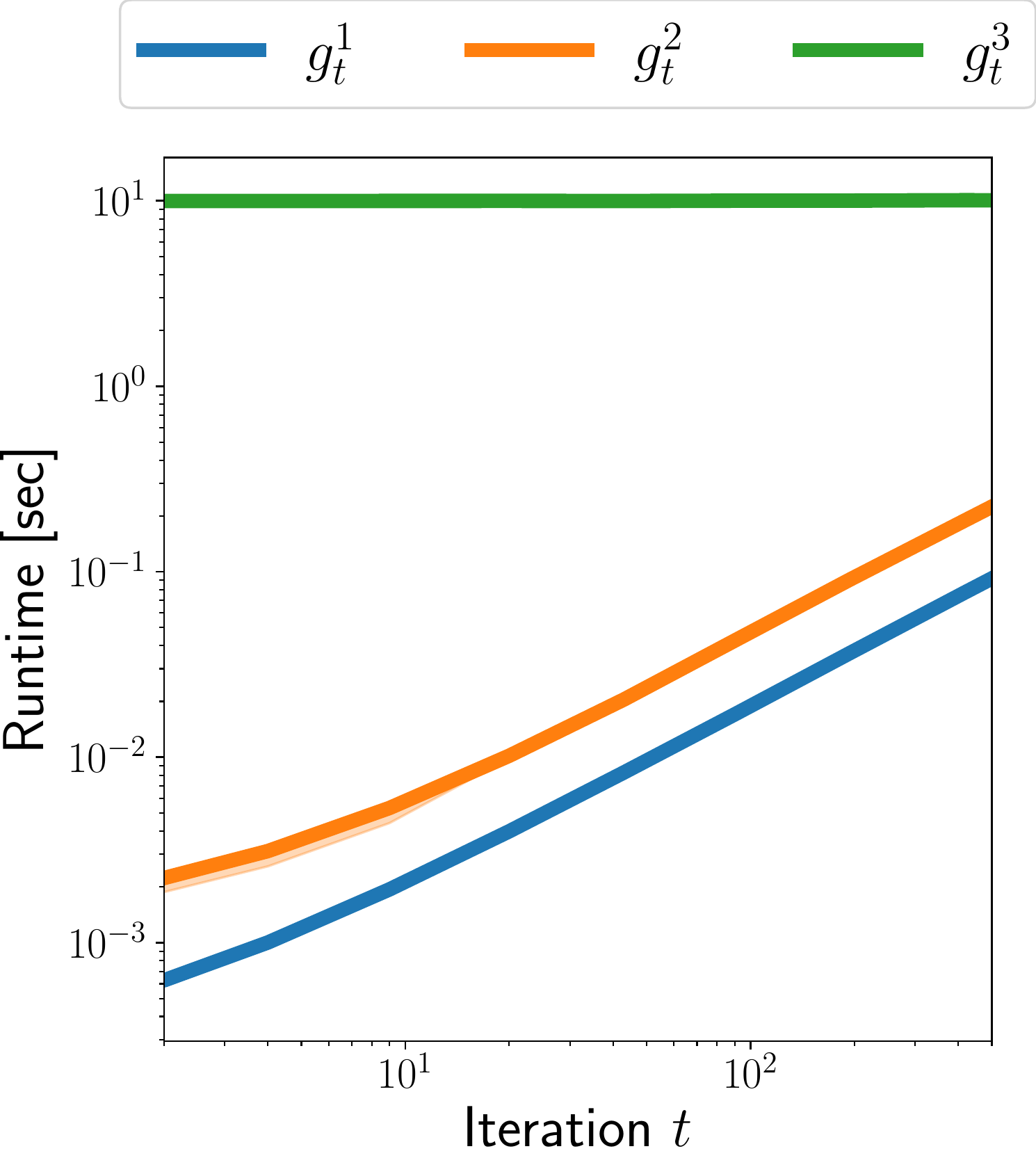}
    \vskip-1em
    \caption{Evolution of the computation time for the gradient estimators $g^i_t$ with the number of iteration $t$.}
    \label{fig:time_complexity}
\end{figure}

\subsection{Computation time of the gradient}
\label{app:computational_time}

To evaluate the relative computational cost of the gradient estimates $g^i_t$, we time the computation of the gradient using the loss $\ell$ defined in \eqref{eq:wasserstein_barycenter}.
We use $n=500$, $m=1,000$ and $N=100$ for the dimensions of $x\in\Delta_+^n$ and $b_i \in \Delta_+^m$ and we sample following the same procedure as for the Wasserstein Distance.
Then, we time the computational time for the gradient estimators $g^i_t$ for different values of $t$, and report in \autoref{fig:time_complexity} the median value of this computation time computed on $50$ realization as well as the first and last decile values to get an idea of the variation of this value.
The results are coherent with the computational complexity analysis in \autoref{tab:computational_cost}, $g^1$ and $g^2$ computation time scales linearly with $t$, with a constant factor between them which capture the value of $c$ which is around 3.5 in our case.
For this scale of problem, $g^3$ requires inverting $N$ matrices $n\times n$. This cost dominates the cost of computing $z_t$, $g^1_t$ and $g^2_t$ for small value of $t$ and it becomes less prohibitive as $t$ grows.

\section{Proof for \autoref{sec:thm}}

\convergenceImplicit*

\begin{proof}
 We define $\mathcal{J}_t=\mathcal{J}(z_t, x)$. The implicit gradient $g^3$ reads
\[
    g^3 = g^* + R(\mathcal{J}_t)(z_t - z^*) + R_ {xz}
            + \mathcal{J}_tR_{zz}
    \enspace.
\]
 Then, we have
 \[
    R(\mathcal{J}_t) = R(\mathcal{J}_t) - R(J^*) = \left(\mathcal{J}_t  - J^*\right)\nabla_{zz}\loss(z^*, x)
\]
We recall that $J^* = \mathcal J(z^*, x)$.
It follows that
\[
    \|\mathcal{J}_t  - J^*\|\leq L_{\mathcal{J}}|z_t - z^*|
     \enspace,
\]
and
\[
    \|R(\mathcal{J}_t)\| \leq L_{\mathcal{J}}L_{z}|z_t-z^*|\enspace.
\]
The result follows using \autoref{eq:bound1}.
\end{proof}

\section{Proof \autoref{sec:jaco}}
\label{proof:sec:jaco}

\subsection{Proof of~\autoref{prop:jac_gd}}
\label{proof:jac_gd}

\jacGD*{}

\begin{proof}
Differentiating the gradient descent recursion, we find that $J_t$ follows the recursion
\begin{equation}
    \label{eq:jac_rec}
    J_{t+1} = J_t - \rho G_t\enspace,
\end{equation}
where $G_t= J_t\nabla_{zz}\loss\left(z_t, x\right) + \nabla_{xz}\loss\left(z_t, x\right)$.
Using $\|I - \rho \nabla_zz\loss(z, x)\|\leq \kappa$, a first crude upper bounding gives
$$
\|J_{t+1}\| \leq \kappa \|J_t\| + \alpha\enspace,
$$
where $\alpha$ is an upper-bound of $\|\nabla_{xz}\loss\|$. This shows that $\|J_t\|$ is bounded.
Next, denoting $\tilde{G}_t = J_t\nabla_{zz}\loss\left(z^*, x\right) + \nabla_{xz}\loss\left(z^*, x\right)$, we find
\begin{align*}
    \Delta^t \triangleq G_t - \tilde{G}_t =& J_t\left( \nabla_{zz}\loss\left(z_t, x\right) - \nabla_{zz}\loss\left(z^*, x\right)\right) \\&+   \nabla_{xz}\loss\left(z_t, x\right) - \nabla_{xz}\loss\left(z^*, x\right)
\end{align*}
Using the third-order differentiability of $\loss$, the rate of convergence of $z_t$, and that $J_t$ is bounded, we find that there exists $\beta> 0$ such that $\|\Delta_t\| \leq \beta \kappa_t$.
Eq.~\ref{eq:jac_rec} finally gives
\begin{align*}
    J_{t+1} - J^* =  &\left(Id - \rho \nabla_{zz}\loss(z^*, x)\right)\left(J_t - J^*\right)\\
    & - \rho \nabla_{zz}\loss(z^*, x) \Delta_t\enspace.
\end{align*}
Taking norms and using the triangular inequality, we find
$$
\|J_{t+1}-J^*\| \leq \kappa \|J_t - J^*\| + \gamma \kappa^t\enspace,
$$
where $\gamma = \rho \mu \beta$. Unrolling the recursion gives, as expected,
$$
\|J_{t}-J^*\|\leq \gamma t \kappa^{t-1}\enspace.
$$
\end{proof}

\subsection{Tightness of the bound in \autoref{prop:jac_gd}}

Importantly, the rate of $\mathcal O(t\kappa^t)$ given in \autoref{prop:jac_gd} is tight.
Indeed, it is reached with in the following example.

\begin{proposition}
\label{prop:conv_grad_autodiff}
For $x \in \bbR^n$, let $\lambda_x = \sum_{i=1}^mx_i$. Consider $\loss(z, x) = \frac12 \lambda_x\|z\|^2$. The iterates produced by gradient descent with step $\rho\leq 1 / \lambda_x$ verify $z_t = \kappa^tz_0$ with $\kappa = 1 - \rho \lambda_x$, and we have $J_t=-\rho t\kappa^{t-1} \mathbb{1}_nz_0^{\top}$.
\end{proposition}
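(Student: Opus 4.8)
The plan is to verify the three claims by direct computation. \textbf{First}, compute the gradient in $z$: since $\loss(z,x)=\frac12\lambda_x\|z\|^2$ with $\lambda_x=\sum_i x_i$, we have $\nabla_z\loss(z,x)=\lambda_x z$. Substituting into the gradient descent recursion $z_{t+1}=z_t-\rho\nabla_z\loss(z_t,x)$ gives $z_{t+1}=(1-\rho\lambda_x)z_t=\kappa z_t$, and the hypothesis $\rho\le 1/\lambda_x$ ensures $\kappa\in[0,1)$ (here $\lambda_x>0$, as is needed for $\loss(\cdot,x)$ to be strongly convex). An immediate induction yields $z_t=\kappa^t z_0$, which is the first assertion.

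\textbf{Second}, obtain $J_t$ by differentiating this closed form in $x$. The vector $z_0$ does not depend on $x$, and $\kappa=1-\rho\lambda_x$ depends on $x$ only through $\lambda_x$, with $\partial\lambda_x/\partial x_i=1$ for every $i$; hence $\partial\kappa^t/\partial x_i=-\rho t\kappa^{t-1}$. In the paper's convention, where $(J_t)_{ij}=\partial (z_t)_j/\partial x_i$, this gives $(J_t)_{ij}=-\rho t\kappa^{t-1}(z_0)_j$, i.e. $J_t=-\rho t\kappa^{t-1}\mathbb{1}_n z_0^{\top}$. Alternatively, and in a way that mirrors the proof of \autoref{prop:jac_gd}, one can unroll the Jacobian recursion $J_{t+1}=J_t-\rho G_t$ with $G_t=J_t\nabla_{zz}\loss(z_t,x)+\nabla_{xz}\loss(z_t,x)$: here $\nabla_{zz}\loss=\lambda_x I_m$ and $\nabla_{xz}\loss(z_t,x)=\kappa^t\mathbb{1}_n z_0^{\top}$, so the recursion collapses to $J_{t+1}=\kappa J_t-\rho\kappa^t\mathbb{1}_n z_0^{\top}$ with $J_0=0$, whose solution is again $J_t=-\rho t\kappa^{t-1}\mathbb{1}_n z_0^{\top}$ by a one-line induction.

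\textbf{Finally}, to make the tightness claim explicit I would observe that $z^*(x)=0$ for all $x$, hence $J^*=0$, and therefore $\|J_t-J^*\|=\rho t\kappa^{t-1}\|\mathbb{1}_n z_0^{\top}\|=\sqrt{n}\,|z_0|\,\rho t\kappa^{t-1}$, which is $\Theta(t\kappa^{t})$ and thus attains, up to constants, the upper bounds of \autoref{prop:gilbert} and \autoref{prop:jac_gd}. There is no genuine obstacle in this proof; the only points demanding a little care are the transpose forced by the paper's $\bbR^{n\times m}$ Jacobian convention, and checking that the rate is actually reached (the $\Theta$, not merely the $O$), since this is precisely what the example is meant to demonstrate.
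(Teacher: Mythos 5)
Your proposal is correct; note that the paper actually states this proposition without any proof (it appears in the appendix purely as a tightness example for \autoref{prop:jac_gd}), so there is nothing to diverge from — your direct computation ($z_t=\kappa^t z_0$, then $\partial\kappa^t/\partial x_i=-\rho t\kappa^{t-1}$, with the transposed $\bbR^{n\times m}$ convention handled correctly) is exactly the argument the authors implicitly rely on. Your alternative derivation via the recursion $J_{t+1}=\kappa J_t-\rho\kappa^t\mathbb{1}_n z_0^{\top}$ also checks out and has the merit of mirroring the machinery in the proof of \autoref{prop:jac_gd}, and the closing observation that $J^*=0$ so $\|J_t-J^*\|=\sqrt{n}\,|z_0|\,\rho t\kappa^{t-1}=\Theta(t\kappa^t)$ is what actually substantiates the tightness claim the paper leaves implicit.
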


\subsection{Proof of \autoref{prop:sgd_bound}}
\label{proof:sgd_bound}

\sgdBound*{}

\begin{proof}
Taking expectations in \autoref{prop:bound_g2} and using \autoref{eq:lip_rj} gives
$$
\bbE[|g^2- g^*|] \leq L_z \bbE[\|J_t -J^*\| |z_t-z^*|] + \frac L 2 \underbrace{\bbE[|z_t - z^*|^2]}_{d_t} \enspace.
$$
Cauchy-Schwarz on the first term gives
$$
\bbE[\|J_t -J^*\| |z_t-z^*|] \leq  \sqrt{\bbE\left[\|J_t-J^*\|^2\right]} \sqrt{d_t}
$$
and then $\|J_t-J^*\|^2\leq \|J_t-J^*\|_F^2$ gives the advertised result.
\end{proof}

\subsection{Proof of ~\autoref{prop:bounding_ineq}}
\label{proof:bounding_ineq}

\boundingIneq*{}

\begin{proof}

Let $U_t = \nabla_{zz}C(z_t, x, \xi_{t+1}) J_t + \nabla_{zx}C(z_t, x, \xi_{t+1})$. We have $\bbE_{\xi_{t+1}}[U_t] = \nabla_{zz}\loss(z^*, x) (J_t - J^*) + \Delta_t$, where $\Delta_t = \left(\nabla_{zz}\loss(z_t, x) - \nabla_{zz}\loss(z^*, x)\right)J_t + \nabla_{zx}\loss(z_t, x) - \nabla_{zx}\loss(z^*, x)$. We find
\begin{align*}
    \|J_{t+1} - J^*\|_F^2 =& \|J_t - J^*\|_F^2 \\
    &- 2\rho_t \langle J_t - J^*, U_t\rangle_F + \rho_t^2 \|U_t\|_F^2
\end{align*}
Taking expectations with respect to $\xi_{t+1}$ yields
\begin{align*}
  \bbE_{\xi_{t+1}}\big[\|J_{t+1}-&J^*\|_F^2\big] =\\& \|J_t -J^*\|_F^2 \\
  &- 2\rho_t \langle J_t - J^*, \nabla_{zz}\loss(z^*, x)(J_t - J^*)\rangle_F  \\
  &- 2 \rho_t \langle J_t - J^*, \Delta_t\rangle_F + \rho_t^2  \bbE_{\xi_{t+1}}\left[\|U_t\|_F^2\right]
\end{align*}
Using strong-convexity for the second term and Cauchy-Schwarz for the third term, we find
\begin{align*}
  \bbE_{\xi_{t+1}}\left[\|J_{t+1}-J^*\|_F^2\right] \leq& (1 - 2\rho_t \mu)\|J_t -J^*\|_F^2 \\
  &+ 2 \rho_t \|J_t - J^*\|_F\|\Delta_t\|_F \\
  &+ \rho_t^2  \bbE_{\xi_{t+1}}\left[\|U_t\|_F^2\right]
\end{align*}
Taking expectations over the whole past
\begin{align*}
  \delta_{t+1} \leq& (1 - 2\rho_t \mu)\delta_t \\
  &+ 2 \rho_t \bbE\left[\|J_t - J^*\|_F\|\Delta_t\|_F\right] + \rho_t^2  \bbE\left[\|U_t\|_F^2\right]
\end{align*}
To majorize the last term,
$$
\|U_t\|_F^2 \leq \underbrace{\|\nabla_{zz}C(z, x, \xi_{t+1})J_t\|^2_F}_{\leq\|J_t\|^2 \|\nabla_{zz}C(z, x, \xi_{t+1}\|^2_F} +\|\nabla_{xz}C(z, x, \xi_{t+1}\|^2_F
$$
Therefore
$$\bbE\left[\|U_t\|_F^2\right]\leq L_J \sigma_{zz}^2 + \sigma_{xz}^2$$

Cauchy-Schwarz on the middle term yields
\begin{align*}
\bbE\left[\|J_t - J^*\|_F\|\Delta_t\|_F\right] \leq & \sqrt{r} \sqrt{\delta_t} \sqrt{\bbE[\|\Delta_t\|^2]}\\
        \leq &  \sqrt{r}L \sqrt{\delta_t d_t}\enspace.
\end{align*}
Combining everything provides the final bound.
\end{proof}

\subsection{Proof of ~\autoref{prop:sgd_cst_step}}
\label{proof:sgd_cst_step}

\sgdCstStep*{}

\begin{proof}
We start by obtaining a simpler bound than~\ref{eq:bound_fro} by completing the squares
\begin{align*}
  (1 - 2\rho_t\mu) \delta_t + 2\sqrt{r}L \rho_t \sqrt{d_t}\sqrt{\delta_t} \leq \\ \left(\sqrt{1 - 2\rho_t\mu}\sqrt{\delta_t} + \frac{\sqrt{r}L\rho_t}{\sqrt{1 - 2\rho_t\mu}}\sqrt{d_t}\right)^2
\end{align*}

And bouding crudely $\sqrt{a+b}\leq\sqrt{a} + \sqrt{b}$, we obtain a simple recursion on $\sqrt{\delta_t}$
\begin{equation}
\label{eq:simpler_recursion}
    \sqrt{\delta_{t+1}}\leq \sqrt{1 -2\rho_t\mu}\sqrt{\delta_t} + \frac{\sqrt{r}L\rho_t}{\sqrt{1 - 2\rho_t\mu}}\sqrt{d_t} + \rho_t B\enspace.
\end{equation}
\citep{Moulines2011} give
\[
    \bbE[|z_t - z^*|^2] \leq (1 - 2 \rho \mu)^t |z_0 - z^*|^2 + \beta^2
    \enspace .
\]
Using $\sqrt{a + b} \leq \sqrt{a} + \sqrt{b}$ for $a, b\geq 0$, we get
\begin{align*}
    \sqrt{d_t} &\leq \kappa_2^t|z_0 - z^*| + \beta
\end{align*}
Eq.~\eqref{eq:simpler_recursion} then gives
$$
    \sqrt{\delta_{t+1}}\leq\kappa_2\sqrt{\delta_t} + \rho \kappa_2^t + (1 - \kappa_2)B_2
$$
Unrolling the recursion and using $\sum_{i=0}^{t} \kappa_2^i \leq \frac{1}{1-\kappa_2}$ gives the proposed bound on $\delta_t$.
\end{proof}

\subsection{Proof of~\autoref{prop:sgd_decr_step}}
\label{proof:sgd_decr_step}

\sgdDecrStep*{}

\begin{proof}

Under the assumptions, Eq.~\eqref{eq:bound_fro} becomes
$$
\delta_{t+1}\leq (1 -2 \mu C t^{-\alpha})\delta_t + 2\sqrt{r}LdCt^{-\frac32\alpha}\sqrt{\delta_t} + B^2C^2t^{-2\alpha}\enspace.
$$
We get rid of the problematic middle term using the inequality, valid for all $\chi>0$,
$$
t^{-\frac32\alpha}\sqrt{\delta_t} \leq \frac12 \left(\frac{\chi \delta_t}{t^\alpha} + \frac{1}{\chi t^{2\alpha}}\right),
$$
which gives
\begin{align*}
\delta_{t+1} \leq &\left(1 - (2\mu C - \chi \sqrt{r}LdC)t^{-\alpha}\right)\delta_t \\
&+ (B^2 C^2 + \frac{\sqrt{r}LdC}{\chi})t^{-2\alpha}\enspace.
\end{align*}
We take $\chi = \frac{\mu}{\sqrt{r}Ld}$, so that the first term becomes \mbox{$1 - \mu Ct^{-\alpha}$}.
Note that it does not give optimal rates, but makes computations much simpler.
In \citep{Moulines2011}, it is shown that for $a, b > 0$,  a recursion satisfying
$$
\delta_{t+1}\leq (1 - at^{-\alpha}) \delta_t + bt^{-2\alpha}
$$
verifies $\delta_t \leq 4 \frac{b}{a}t^{-\alpha} + o(t^{-\alpha})$. The result follows by taking $a = \mu C$ and $b=B^2 C^2 + \frac{rL^2 d^2C}{\mu^2}$.
\end{proof}

\subsection{Proof of \autoref{prop:conv_pth}}
\label{proof:conv_pth}

\convPth*{}

\begin{proof}
Gradient descent iterates
$$z_{t+1}=z_t - \rho D^{\top} (Dz_t - x)^{p-1}\enspace.$$
The residuals $r_t = x - Dz_t$ therefore verify the recursion
$$
r_{t+1} = r_t - \rho DD^{\top}r_t^{p-1}\enspace.
$$
Since we assume $DD^{\top} = I_n$, they verify $r_{t+1} = r_t - \rho r_t^{p-1}$. Each entry of $r_t$ therefore evolves independently, following the $1$-d recursive equation
\begin{equation}
    \label{eq:1d_rec}
    u_{t+1} = u_t - \rho u_t^{p-1}
\end{equation}
Standard analysis techniques show that this gives $u_t = \left(\frac{1}{\rho (p-2)t}\right)^{\frac{1}{p-2}}(1 + O(\frac{\log(t)}{t}))$, and therefore each coefficient of $r_t$ satisfies the same asymptotic development.
The Jacobian verifies
$$
J_{t+1} = J_t - (p-1)\rho (J_t D^{\top} - I_m)\diag( r_t^{p-2})D,
$$
and denoting $M_t =  I_n-J_tD^{\top}$, we find
$$
M_{t+1} = M_t(I_n - (p-1)\rho \diag(r_t^{p-2}))\enspace.
$$
Since the rightmost term is diagonal, we can rewrite this recursion as:

$$
M_{t+1} = M_t\diag(\mathbb{1}_n - (p-1)\rho r_t^{p-2})
$$
Unrolling this recursion gives:
$$
M_t = M_0 \diag(\prod_{j \leq t-1} \mathbb{1} - (p-1)\rho r_t^{p-2})
$$
We can then majorize each coefficient in the $\diag$ by:
$$
\prod_{j \leq t-1}(1 - (p-1)\rho u_j^{p-2}) \leq \exp(\sum_{j \leq t-1} - (p-1) \rho u_j^{p-2})\enspace,
$$
where $u_t$ follows the recursion~\eqref{eq:1d_rec}.
The asymptotic development of $u_t$ gives:
$$
u_t^{p-2} = \frac{1}{\rho(p-2)t} + O(\frac{\log(t)}{t^2})
$$
and as a consequence, denoting $\alpha = \frac{p-1}{p-2}$:
$$
\exp(\sum_{j \leq t-1} - (p-1) \rho u_j^{p-2}) = O( t^{-\alpha})
$$
Overall, we have $M_t = O(t^{-\alpha})$ and $r_{t}^{p-1} = O(t^{-\alpha})$, so $g_2 = M_tr_t^{p-1} = O(t^{-2\alpha}$.
\end{proof}

\section{Proof of \autoref{prop:inexact_sgd}}
\label{proof:inexact_sgd}

We start by giving the convergence rate of the SGD with $(\delta, L, \mu)$-inexact oracle for a function $f$ defined on $x\in\bbR^n$ as
\[
    f(x) = \bbE_\upsilon[F(x, \upsilon)]
\]
for $\upsilon$ a random variable distributed with probability $d_\upsilon$.
For $\upsilon_0 \sim d_\upsilon$, we denote $(F_\delta(\cdot, \upsilon_0), G_\delta(\cdot, \upsilon_0)$ a $(\delta, L, \mu)$-inexact oracle  of $F(\cdot, \upsilon_0)$, uniform in $\upsilon_0$.

\begin{restatable}{lemma}{inexactSgd}
    \label{lem:inexact_sgd}
    For a $\mu$-strongly convex $L$-smooth function $f$ and a $(\delta, \mu, L)$-inexact oracle $(F_\delta, G_\delta)$ of $F$ such that
    \begin{align*}
        \mathbb E_\upsilon[F_{\delta}(x, &\upsilon)] = f_\delta(x)
       \:\text{ , }\:
        \mathbb E_\upsilon[G_{\delta}(x, \upsilon)] = g_\delta(x) \enspace ,\\
       \:\text{ and }~~~~&
        \mathbb E_\upsilon[|G_{\delta}(x, \upsilon) - g_\delta(x)|^2] \le \sigma^2
        \enspace .
    \end{align*}
    Then, the iterates of the stochastic gradient descent with constant step-size $\eta < \frac{1}{L}$ verify
    \[
        \mathbb E |x_q - x^*|^2 \le (1 - \eta\mu)^q |x_0 - x^*|^2 + \frac{\eta}{\mu} \sigma^2 + \frac{2}{\mu} \delta.
    \]
\end{restatable}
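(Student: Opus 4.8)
The plan is to adapt the deterministic inexact gradient descent analysis of \citet{Devolder2014,Devolder2013} to the stochastic setting, carrying along the extra variance term $\sigma^2$. First I would note that the inexact oracle property of $(F_\delta,G_\delta)$ holds uniformly in $\upsilon$; taking the expectation $\mathbb E_\upsilon$ of its two defining inequalities and using $\mathbb E_\upsilon[F(x,\upsilon)]=f(x)$, $\mathbb E_\upsilon[F_\delta(x,\upsilon)]=f_\delta(x)$ and $\mathbb E_\upsilon[G_\delta(x,\upsilon)]=g_\delta(x)$ shows that $(f_\delta,g_\delta)$ is itself a $(\delta,\mu,L)$-inexact oracle of $f$. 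From here on I work only with this averaged oracle together with the variance bound $\mathbb E_\upsilon[|G_\delta(x,\upsilon)-g_\delta(x)|^2]\le\sigma^2$.

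Next I would expand one SGD step. Writing $g_q=G_\delta(x_q,\upsilon_{q+1})$,
\[
|x_{q+1}-x^*|^2=|x_q-x^*|^2-2\eta\langle g_q,\,x_q-x^*\rangle+\eta^2|g_q|^2 .
\]
Taking the conditional expectation given $x_q$ replaces $g_q$ by $g_\delta(x_q)$ in the linear term, while the bias–variance decomposition gives $\mathbb E[|g_q|^2\mid x_q]\le|g_\delta(x_q)|^2+\sigma^2$.

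The core of the argument is to lower bound $\langle g_\delta(x_q),x_q-x^*\rangle$ using the inexact oracle twice. The lower inequality with $y=x_q$, $x=x^*$ gives $\langle g_\delta(x_q),x_q-x^*\rangle\ge f_\delta(x_q)-f(x^*)+\tfrac{\mu}{2}|x_q-x^*|^2$. The upper inequality with $y=x_q$ and the test point $x=x_q-\tfrac1L g_\delta(x_q)$, combined with $f(x)\ge f(x^*)$, yields the descent-lemma bound $\tfrac{1}{2L}|g_\delta(x_q)|^2\le f_\delta(x_q)-f(x^*)+\delta$. Adding these produces
\[
\langle g_\delta(x_q),x_q-x^*\rangle\ge\tfrac{1}{2L}|g_\delta(x_q)|^2+\tfrac{\mu}{2}|x_q-x^*|^2-\delta .
\]
Substituting into the expanded step and collecting terms gives
\[
\mathbb E[|x_{q+1}-x^*|^2\mid x_q]\le(1-\eta\mu)|x_q-x^*|^2+\big(\eta^2-\tfrac{\eta}{L}\big)|g_\delta(x_q)|^2+2\eta\delta+\eta^2\sigma^2 .
\]
Since $\eta<1/L$ (and hence also $\eta\mu<1$ because $\mu\le L$), the coefficient $\eta^2-\eta/L$ is negative and that term is dropped. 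Taking total expectations and unrolling the scalar recursion $a_{q+1}\le(1-\eta\mu)a_q+2\eta\delta+\eta^2\sigma^2$, bounding the geometric sum $\sum_{k\ge0}(1-\eta\mu)^k$ by $\tfrac{1}{\eta\mu}$, yields $\mathbb E|x_q-x^*|^2\le(1-\eta\mu)^q|x_0-x^*|^2+\tfrac{2\delta}{\mu}+\tfrac{\eta\sigma^2}{\mu}$, which is the claimed bound.

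The only genuinely delicate point is making the two applications of the inexact oracle fit together: one must choose the test point $x=x_q-\tfrac1L g_\delta(x_q)$ in the upper inequality so that the $\tfrac{L}{2}|x-y|^2$ term and the linear term combine into $-\tfrac{1}{2L}|g_\delta(x_q)|^2$, and then recognize that the hypothesis $\eta<1/L$ is precisely what is needed to discard the residual $|g_\delta(x_q)|^2$ contribution. Everything afterwards is the standard unrolling of a contractive recursion with an additive error.
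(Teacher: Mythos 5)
Your proof is correct and follows essentially the same route as the paper: one-step expansion, conditional expectation with the bias--variance bound, the inexact-oracle lower inequality at $x^*$ and upper inequality at an auxiliary gradient-step point to absorb the $|g_\delta(x_q)|^2$ term using $\eta \le \frac1L$, then unrolling the contractive recursion. The only cosmetic difference is your choice of test point $x_q - \frac{1}{L}g_\delta(x_q)$ instead of the paper's $\widetilde x = x_q - \eta g_\delta(x_q)$, which yields the same final constants.
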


\begin{proof}

    Consider the solution estimate $x_{q+1}$ at iteration $q$, obtained through stochastic gradient descent \emph{i.e.}
    $x_{q+1} = x_q - \eta\nabla_x G_{\delta}(x_q, \upsilon_{q+1})$. We denote $r_{q+1} = \mathbb E [|x_{q+1} - x^* |^2]$ and $\widehat r_{q+1} = \mathbb E [|x_{q+1} - x^* |^2 | \upsilon_{q+1}]$.
    Then
    \begin{align*}
        |x_{q+1} - x^*|^2 = & |x_{q} - x^*|^2 - 2\eta \langle G_\delta(x_q, \upsilon_{q+1}), x_q - x^*\rangle\\
            & + \eta^2|G_\delta(x_q, \upsilon_{q+1})|^2\\
    \end{align*}
    We take the expectation relatively to $\upsilon_{q+1}$
    \begin{align*}
        \widehat r_{q+1} \le & |x_{q} - x^*|^2 - 2\eta \langle g_\delta(x_q), x_q - x^*\rangle \\
        & + \eta^2|g_\delta(x_q)|^2 + \eta^2\mathbb E_{\upsilon_q}[|G_{\delta}(x_q, \upsilon_q) - g_\delta(x_q)|^2] \\
        \le & |x_{q} - x^*|^2 + 2\eta \langle g_\delta(x_q), x^* - x_q\rangle \\
        & + \eta^2|g_\delta(x_q)|^2 + \eta^2\sigma^2
    \end{align*}
    Using $\langle g_\delta | x^* - x_q\rangle \le f(x^*) - f_{\delta}(x_q) - \frac{\mu}{2}|x_q - x^*|^2$, we get

    \begin{align*}
        \widehat r_{q+1}^2 \le & |x_{q} - x^*|^2 + 2\eta (f(x^*) - f_{\delta}(x_q) - \frac{\mu}{2}|x_q - x^*|^2) \\
        & + \eta^2|g_\delta(x_q)|^2 + \eta^2\sigma^2\\
        \le & (1 - \eta\mu)|x_{q} - x^*|^2 + \eta^2\sigma^2\\
        & + 2\eta (f(x^*) - f_{\delta}(x_q) + \frac{\eta}{2}|g_\delta(x_q)|^2)
    \end{align*}
    We introduce $\widetilde x = x_q - \eta g_{\delta}(x_q)$. Using the rhs of the $\delta$-inexact oracle, we have
    \begin{align*}
        &f(\widetilde x) - f_\delta(x_q) - \langle g_\delta(x_q) | \widetilde x - x_q\rangle \le \frac{L\eta^2}{2}|g_\delta(x_q)|^2 + \delta\\
        &\emph{i.e}\\
        &f(\widetilde x) - f_\delta(x_q) + \frac{\eta}{2}|g_\delta(x_q)|^2
        \le \delta - (1 - \eta L) \frac{\eta}{2} |g_\delta(x_q)|^2
        \enspace .
    \end{align*}
    Using this in the previous equation, we obtain

    \begin{align*}
        r_{q+1}^2 \le & (1 - \eta\mu)|x_{q} - x^*|^2 + \eta^2\sigma^2 + 2\eta\delta\\
        &  + \underbrace{2\eta (f(x^*) - f(\widetilde x))
          - (1 - \eta L) \eta^2 |g_\delta(x_q)|^2}_{\le 0}
          \enspace ,
    \end{align*}
    where the two terms on the second line are non-positive. Indeed, $\eta \le \frac{1}{L}$ and $f(x^*) \le f(\widetilde x)$.
    Taking the expectation relatively to $\upsilon_{0}, \dots, \upsilon_{q-1}$ gives the following recursion relationship
    \[
        r_{q+1}^2 \le (1 - \eta\mu)r_{q}^2
            + \eta^2\sigma^2 + 2\eta\delta
            \enspace .
    \]
    applying this recursion $q$ times yields
    \[
        r_{q+1}^2 \le (1 - \eta\mu)^q r_{0}^2
            + (\eta^2\sigma^2 + 2\eta\delta)\sum_{k=0}^q (1 - \eta\mu)^k
            \enspace ,
    \]
    and we obtain the desired results as $\sum_{k=0}^q (1 - \eta \mu) < \frac{1}{\eta\mu}$
\end{proof}

\inexactSgdAutodiff*{}

\begin{proof}
As shown in \autoref{sub:inexact_oracle}, $(\ell, g^i)$ is a $(\widetilde\delta_i, \frac{\mu_x}{2}, 2L_x)$-inexact oracle with $\widetilde\delta_i = \Delta_i^2(\frac{1}{\mu_x} + \frac{1}{2L_x})$.

The variance of the stochastic inexact oracles can be bounded as follow
\begin{align}
    \nonumber
    \mathbb E [|g^i(x, \upsilon) - g^i(x)|^2]
        \le & 2(E [
        |g^i(x, \upsilon)  - \nabla_x h(x, \upsilon)|^2
        \\
        & + |\nabla_x h(x, \upsilon) - \nabla_x \ell(x)|^2\\\nonumber
        & + |\nabla_x\ell(x)- g^i(x)|^2])
        \enspace ,\\
        \le & \underbrace{2(\sigma^2 + 2\Delta_i^2)}_{\widetilde\sigma^2}
        \enspace .
\end{align}
Using these two bounds with the previous result yield
\begin{align}
    \mathbb E |x_q - x^*|^2
        \le &
        (1 - \frac{\eta\mu}{4})^q |x_0 - x^*|
        + \frac{4\eta}{\mu} \widetilde\sigma^2
        + \frac{2}{\mu} \widetilde\delta\\
        \le &
        (1 - \frac{\eta\mu}{4})^q |x_0 - x^*|
        + \frac{4\eta}{\mu} \sigma^2
        + \frac{2}{\mu} \delta
\end{align}
    with $\delta = \Delta_i^2 (\frac{1}{L} + \frac{1}{\mu} + 2\eta)$

\end{proof}

\end{document}